\documentclass[smallextended]{svjour3}
\usepackage{hyperref}

\usepackage{amssymb, amsfonts, amsmath}

\usepackage{amsthm}
\usepackage [autostyle, english = american]{csquotes}
\usepackage{tabularx}
\usepackage{graphicx}
\usepackage{url}
\usepackage{bbm}
\usepackage{algorithm}
\usepackage{algorithmic}

\MakeOuterQuote{"}

\renewcommand{\P}{\mathcal{P}}

\newcommand{\X}{\mathcal{X}}

\newcommand{\B}{\mathcal{B}}
\newcommand{\Q}{\mathcal{Q}}

\newcommand{\Z}{\mathcal{Z}}

\newcommand{\size}[1]{\lvert #1 \rvert}

\newcommand{\figref}[1]{Figure~\ref{fig:#1}}
\newcommand{\tabref}[1]{Table~\ref{tab:#1}}
\renewcommand{\eqref}[1]{Equation~\ref{eq:#1}}
\newcommand{\eqsref}[2]{Equations~\ref{eq:#1} and~\ref{eq:#2}}

\newcommand{\secref}[1]{Section~\ref{sec:#1}}
\newcommand{\secstworef}[2]{Sections~\ref{sec:#1} and~\ref{sec:#2}}
\newcommand{\appref}[1]{Appendix~\ref{app:#1}}
\newcommand{\thmref}[1]{Theorem~\ref{thm:#1}}

\newtheorem*{inftheorem}{Theorem}
\newtheorem*{inflemma}{Lemma}

\newcommand{\lemref}[1]{Lemma~\ref{lem:#1}}
\newcommand{\lemstworef}[2]{Lemmas~\ref{lem:#1} and~\ref{lem:#2}}

\newcommand{\defref}[1]{Definition~\ref{def:#1}}

\newcommand{\corref}[1]{Corollary~\ref{cor:#1}}

\newcommand{\algref}[1]{Algorithm~\ref{alg:#1}}
\newcommand{\lineref}[1]{Line~\ref{lst:line:#1}}
\newcommand{\linestworef}[2]{Lines~\ref{lst:line:#1} and~\ref{lst:line:#2}}
\newcommand{\linestwosref}[2]{Lines~\ref{lst:line:#1} to~\ref{lst:line:#2}}

\newcommand{\mmin}[1]{\underset{#1}{\text{min }}}
\newcommand{\mmax}[1]{\underset{#1}{\text{max }}}

\renewcommand{\sup}[1]{\underset{#1}{\text{sup }}}

\newcommand{\BigO}[1]{\ensuremath{\operatorname{O}\!\left(#1\right)}}

\newcommand{\BigOmega}[1]{\ensuremath{\operatorname{\Omega}\!\left(#1\right)}}
\newcommand{\BigTheta}[1]{\ensuremath{\operatorname{\Theta}\!\left(#1\right)}}

\newcommand{\train}{\textsc{train-model}}
\DeclareMathOperator{\vol}{vol}
\def\orcidID#1{\unskip$^{[#1]}$}

\smartqed
\bibliographystyle{spmpsci}
\newcommand\newcite{\cite}

\begin{document}

\title{An Algorithm for Learning Representations of Models With Scarce Data}

\author{Adrian de Wynter}
\authorrunning{A. de Wynter}

\institute{$^1$Microsoft Corporation. Work done while at Amazon Alexa. \\
  \email{adewynter@microsoft.com} \\
  \orcidID{ORCID: 0000-0003-2679-7241}
}

\date{Received: date / Accepted: date}

\maketitle

\begin{abstract}%

We present an algorithm for solving binary classification problems when the dataset is not fully representative of the problem being solved, and obtaining more data is not possible. 
It relies on a trained model with loose accuracy constraints, an iterative hyperparameter searching-and-pruning procedure over a search space $\Theta$, and a data-generating function. 
Our algorithm works by reconstructing up to homology the manifold on which lies the support of the underlying distribution. 
We provide an analysis on correctness and runtime complexity under ideal conditions and an extension to deep neural networks. 
In the former case, if $\size{\Theta}$ is the number of hyperparameter sets in the search space, 
this algorithm returns a solution that is up to $2(1 - {2^{-\size{\Theta}}})$ times better than simply training with an enumeration of $\Theta$ and picking the best model. 
As part of our analysis we also prove that an open cover of a dataset has the same homology as the manifold on which lies the support of the underlying probability distribution, if and only said dataset is learnable. 
This latter result acts as a formal argument to explain the effectiveness of data expansion techniques. 
\keywords{data augmentation \and semi-supervised learning}
\end{abstract}

\section{Introduction}

The goal of most machine learning systems is to, given a sample of an unknown probability distribution, fit a model that generalizes well to other samples drawn from the same distribution. 
It is often assumed that said sample is fully representative of the unknown, or \emph{underlying}, probability distribution being modeled, as well as being independent and identically distributed (i.i.d.) \cite{ShalevUML}. 

It occurs in many natural problems that at least one part of said assumption does not hold. This could be by either noisy or insufficient sampling of the underlying function, or an ill-posed problem, where the features of the data do not represent the relevant characteristics of the phenomenon being described \cite{DundarEtAl,Hampel}. 
This issue, however, becomes more self-evident when training small, shallow models, where exposure to large amounts of representative data is critical for proper generalizability \cite{Hanneke,kawaguchi2017generalization,Livni}.

Moreover, in recent years, the emergence of large language models (LLMs), and their successful application as data generators for smaller models \cite{NEURIPS2023_ae9500c4,YangEtAl,phi1} poses the question as to whether there is a formal way to understand this effectiveness.

\subsection{Contributions}
We present an algorithm, which we dub Agora,\footnote{One of Plato's Dialogues---\emph{Timaeus}---involves a conversation between the eponymous character and Socrates, about the nature of the world and how it came into being. It fits well with the nature of our algorithm, and with the goal of machine learning.} for solving binary classification problems when the data does not fully characterize the underlying probability distribution, and further data acquisition is not possible. 
Note that this further data acquisition does not include synthetic data expansion techniques. 

Our algorithm works by combining hyperparameter optimization and data augmentation. At every step it:

\begin{enumerate}
\item Trains a model (\emph{Timaeus})
\item Evaluates Timaeus, and collects the data points that were mislabeled from the evaluation set.
\item It then generates a new dataset on these mislabeled data points with a data-generating function (the $\tau$-\emph{function}). This function is an abstract construct, but could be any generative strategy, such as an LLM, a grammar, divergence-based sampling between manifolds. 
\item It labels this new dataset with a trained, but relatively low-performing, model (\emph{Socrates}), and concatenates it to the training set.
\end{enumerate}
Agora adapts to the evolving dataset by searching and pruning multiple hyperparameter sets $\theta \subset \mathbb{R}$ from a \emph{search space} $\Theta$. 
All of the algorithm-specific nomenclature can be found in \tabref{nomenclature}, and the full algorithm in \algref{merlinalg}. 

\subsubsection{Correctness Bounds for Agora}

We provide correctness (\thmref{maintheorem}) and runtime (\thmref{timecomplexity}) bounds for our algorithm. 
Barring some technical assumptions, we prove that Agora iteratively builds a representative dataset that can be learned with Timaeus, by reconstructing (up to homology) the manifold on which lies the support of the underlying distribution being learned. 
A diagram of this is in \figref{algorithmdiagram}. 

Compared to simply picking the best trained model over all $\theta \in \Theta$, we prove that Agora's solution is up to $2\left(1 - \frac{1}{2^{\size{\Theta}}}\right)$ times better; where $\size{\Theta}$ is the cardinality (number of candidate hyperparameter sets) of the search space. 
This result holds if the underlying probability distribution $\P$ is supported on a compact Riemannian manifold, and Timaeus is unable to "forget" a correctly-labeled point. 
We also provide experimental results in \appref{experiments}.

\subsubsection{Effectiveness of Data-Expansion Techniques}

We prove that under some technical assumptions, an open cover of a dataset $D$ sampled from $\P$ has the same homology as the manifold where $\P$ is supported on, if and only if $D$ is learnable.%
While we only leverage this result as a mechanism for our correctness proofs, we believe that this result is of interest to the community, as it brings in techniques from learning theory and topological data analysis and serves as a formal proof of when and how data expansion techniques work over statistical manifolds. 

\begin{table}
\centering
\begin{tabular}{ |c|c| } \hline
 Component name & Purpose in Agora \\ \hline\hline
 Timaeus                     & Model to be trained \\
 Socrates                    & Pre-trained model for labeling new points \\
 $\tau$-function             & Noisy data-generating function \\ 
 Hyperparameter set $\theta$ & Single set of hyperparameters for the training algorithm, $\theta \subset \mathbb{R}$ \\
 Search space $\Theta$       & Set of distinct hyperparameter sets $\theta$ \\ \hline
\end{tabular}
\caption{Definitions for the key components of Agora.}
\label{tab:nomenclature}
\end{table}

\begin{figure}
    \centering
    \includegraphics[width=\columnwidth]{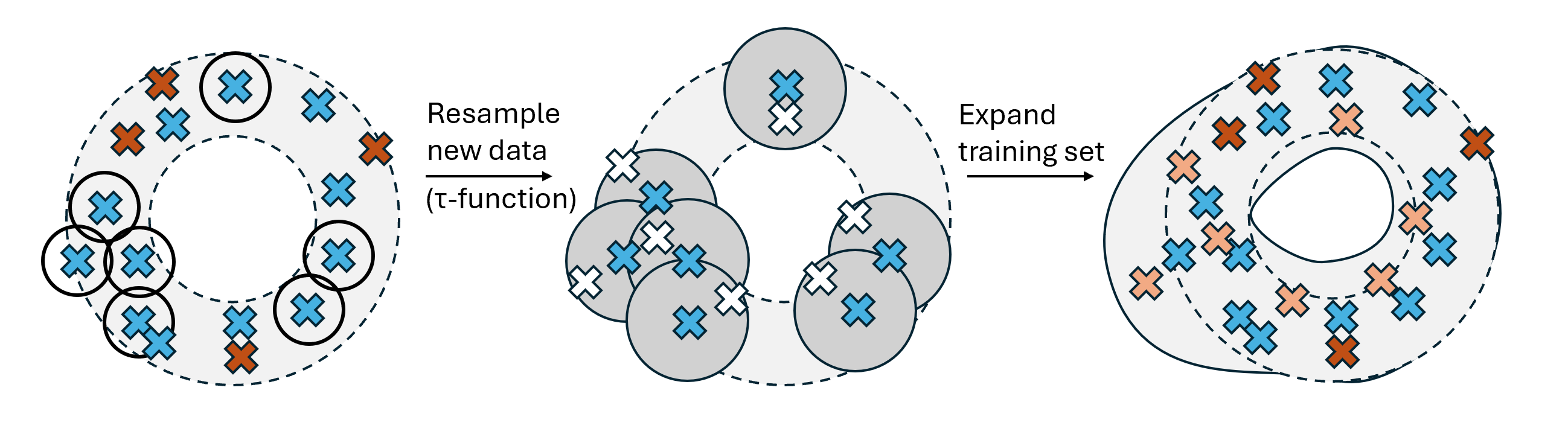}
    \caption{Intuition behind our algorithm. The training set ($D$) is denoted by red crosses, and the evaluation set ($E$) by blue crosses. 
    The instance space (phenomenon being modeled) is an annulus, but $D$ is not sufficient to describe (learn) it. 
    \textit{Left}: $D$ and $E$ superimposed on the instance space. 
    Circled in black are elements from $E$ set that the model mislabeled. 
    \textit{Middle}: The $\tau$-function creates new data within a neighborhood of the mislabeled points (white crosses). 
    \textit{Right}: The new points are labeled with Socrates and added to $D$ (lighter red crosses), leaving $E$ intact. Observe how the new instance space is equivalent (up to homology) to the original instance space. 
    We show that, barring some technical conditions, the $\tau$-function creates a dataset whose open cover has the same homology than the original instance space, and it is learnable. 
    This is allows models to learn the phenomenon when $D$ is not representative, as in the picture. 
    }
    \label{fig:algorithmdiagram}
\end{figure}

\subsection{Outline}
The rest of this paper is structured as follows: 
we provide a brief survey of related approaches to this and similar problems in \secref{relatedwork}. 
After establishing notation, basic definitions, and assumptions in \secref{background}, we introduce Agora in \secref{algorithm}. 
We provide an analysis of its correctness and asymptotic bounds in \secstworef{proofs}{timebounds}, before concluding in \secref{conclusion} with a discussion of our work, as well as potential applications---and pitfalls---when using this algorithm.

\section{Related Work}\label{sec:relatedwork}

The algorithm and proofs we present in this paper draw from multiple machine learning subfields. 

\subsection{Knowledge Distillation}
Using a pre-trained model to improve the performance of another model is known as knowledge distillation. It was originally proposed by Bucilu\v{a} et al. \newcite{bucilu2006model} as a model compression technique, and later applied to deep learning with much success \newcite{sslref}. 
Since Agora labels new points automatically, it can be viewed as a type of active learning \newcite{balcanactive,HannekeALTheory,BurrActive} related to membership query synthesis \cite{Angluin}, and analogous to the work of Balasubramanian et al. \newcite{Balasubramanian} and Sener and Savarese \newcite{sener2018active}.  
However, active learning focuses on selecting a subset of the data to be labeled, for a specific input task and model. Agora is designed to be task-agnostic, and in our setting we are unabled to sample more data. Moreover, our labeler does not need to have a good accuracy on the dataset: our proofs only require a lower-bound of $2/3$, and rely instead on inducing provably good perturbations in the dataset. 

\subsection{Data Expansion}
Perturbing the dataset is a common approach when applied to deep learning \cite{lin2013network}, and it is known that this technique is able to improve generalizability \cite{Graham}. 
When facing a task with scarce data, another commonly employed technique is the use of generative models to create synthetic data \cite{ho2019pba,alex2017learning,zhang2019adversarial}. 
This is not always a reliable strategy: 
theoretical research around the effectiveness of these approaches can be found in Dao et al. \newcite{KernelData} and Wu et al. \newcite{wu2020generalization}, but focused on specific datasets and approaches, while ours is agnostic to the phenomenon. 
See Oliver et al. \newcite{OliverEtAL} for an overview of both data augmentation and knowledge distillation, and how they fit on the wider field of semi-supervised learning. 
These techniques often focus on the training set without consideration as to whether they belong to the support of the underlying probability distribution, %
which implies that most algorithms for synthetic data generation are task-dependent, unlike our work. 
Recent techniques for model training have relied on a LLM to generate task-related data. Although it is not without its drawbacks (e.g., perpetuating biases \cite{NEURIPS2023_ae9500c4,YangEtAl}), it has proven successful for the creation of both specialized and generalized models, such as Phi-1 \cite{phi1} or Alpaca \cite{alpaca}. 
See Yu et al. \cite{NEURIPS2023_ae9500c4} and Yang et al. \cite{YangEtAl} for current surveys of this subject. 

\subsection{Learnability and Homology}

Considering scenarios where some parts of the i.i.d. assumption do not hold has become a more active field of research, as the emerging field of decentralized learning gains momentum. See, for example the applied works by Dundar et al. \newcite{DundarEtAl}, and Hsieh et al. \newcite{hsieh2020the} from a deep learning point of view. 
There is a trove of work done in learning theory to determine when is a hypothesis class able to learn and generalize for an input task \cite{balcanactive,Hanneke,KearnsEncryption,Valiant}, and more specialized to deep neural networks \cite{kawaguchi2017generalization}, and online learning \cite{Littlestone,ShaiOnline}. 
It is known that most learning algorithms allow some form of error-tolerant statistical query \cite{KearnsSQ}. 
We leverage these results for our proofs, in addition to techniques from topological data analysis. 
The interested reader is enouraged to review the survey by Wasserman \newcite{Wasserman}, and the brief but thorough note by Weinberger \newcite{Weinberger}, for introductions to this field. 
We must highlight that our proof of the correspondence between homology and learnability is, to the best of our knowledge, new. That being said, it relies on the well-known result by Niyogi et al. \newcite{NiyogiSmaleWeinberger}, who stated the minimum sample size needed to reconstruct a manifold. Further positive results around the reconstructability of a manifold under noisy conditions can be found in Genovese et al. \newcite{GenoveseEtAl} and Chazal et al. \newcite{ChazalRobust}. 
Similar arguments using the packing number are given by Mohri et al. \newcite{MohriEtAl} and a number of related works \cite{AlonEtAl,HausslerPacking}; however, said result is used to bound various complexity measures, rather than the data itself. Crucially, they do not leverage the homology of the manifold where the probability distribution is supported on, which, as we point out, it is easier to work with than other similarity measures. 

\subsection{Boosting and Hyperparameter Optimization}

A well-known techinque to improve the performance of a set of weak learners is boosting \cite{KearnsAndValiant,Schapire}, which is a class of meta-algorithms where the predictions of said learners are aggregated to obtain the final result. 
There are modifications of boosting algorithms for scarce-data scenarios, which rely on a prior. 
For example, a variation of LogitBoost \cite{LogitBoost} uses a human-in-the-loop component (e.g., an annotator) as the prior to to enforce learnability conditions \cite{SchapireAndFreund}. Others, like ASSEMBLE \cite{Assemble}, make instead use of pseudoclasses (the sign of the ensemble on a point) and regularization strategies to achieve generalizability. 
Agora, which, from this perspective, is also a meta-algorithm, is focused on a single learner. It generates the data automatically, based on the relationship between the hypothesis class and the number of samples needed for manifold reconstructability. 
It also performs a greedy search over the hyperparameter space, and, for simplicity, we do not focus on more complex algorithms for hyperparameter optimization (HPO). 
Other heuristics, such as grid search, random search \cite{10.5555/2188385.2188395}, and Bayesian optimization methods \cite{10.5555/2999325.2999464} are quite popular. 
HPO is a very active area of research given its associated complexities: for example, optimization must be done over variables not commonly seen in single-model training (e.g. mixtures of integers and real numbers), and require intensive resources to run (which is of concern with models like deep neural networks) \cite{luohpo,YANG2020295,AutomatedHPOML}. 
See the survey by Yang and Shami \cite{YANG2020295} for a deeper introduction to the subject.

\section{Background: Notation and Definitions}\label{sec:background}

\subsection{Notation}\label{sec:motivation}

We begin by introducing the notation and terminology used across this paper. 
Let $X \subset \mathbb{R}^m$ be a nonempty set equipped with a $\sigma$-algebra, called the \emph{instance space}. Also let $Y = \{0, 1\}$ be a set called the \emph{label space}. 
Suppose we have a nonempty sample $D_X = \{x_1, \dots, x_n\}$ drawn with a unknown probability distribution $\P$ defined over $X$, and labeled with a fixed-but-unknown function, or \emph{concept}, $c \colon X \rightarrow Y$. 

Following Valiant \cite{Valiant}, the goal of the binary classification problem is to, given a \emph{dataset} $D = \{\langle x_i, y_i \rangle \colon x_i \in D_X, y_i = c(x_i) \}_{i \in [1, n]}$, output a measurable function (\emph{model}) $f \colon X \rightarrow Y$ from a nonempty \emph{hypothesis class} $W$ via a polynomial-time algorithm (a map) $\train \colon W \times X \times Y \rightarrow W$ such that the true error, 
\begin{equation}\label{eq:errorprobfunction}
\text{Pr}_{ x \sim \P}[f(x) \neq c(x) ] \leq \epsilon,
\end{equation}

\noindent is satisfied with probability at least $1 - \delta$ for some $0 < \epsilon, \delta \leq 1/2$. 

It is common to approximate \eqref{errorprobfunction} by considering instead either the empirical error rate  
\begin{equation}\label{eq:errorfunction}
\text{err}(f(\cdot), E) = \frac{1}{\size{E}}\sum_{\langle x_i, y_i \rangle \in E} \mathbbm{1}[f(x_i) \neq y_i ] ,
\end{equation}

\noindent or the accuracy

\begin{equation}\label{eq:accuracyfunction}
\text{acc}(f(\cdot), E) = 1 - \text{err}(f(\cdot), E)
\end{equation} 

\noindent on a separate \emph{evaluation} dataset $E = \{\langle x_i, y_i \rangle \colon x_i \sim \P, y_i = c(x_i) \}_{i \in [1, p]}$, $E \cap D = \emptyset$, as the main predictor of generalizability of the model. 
Solving this problem directly, however, is known to be hard for multiple problems.\footnote{For example, agnostically learning intersections of half-spaces \cite{Daniely,Klivans}.}

From \eqsref{errorprobfunction}{errorfunction} it is possible to derive a quantity that characterizes a dataset's ability to \emph{represent} the underlying probability distribution $\P$, given some $\delta$, $\epsilon$, and $W$:

\begin{definition}[$(\epsilon, \delta, W)$-representativeness of $\P$]\label{def:erepresentativeness}
Let $D = \{\langle x, y \rangle \colon x \sim \P, y = c(x)\}_{i \in [1, n]}$ be a dataset sampled with an unknown probability distribution $\P$ defined over an instance space $X \subset \mathbb{R}^m$, and labeled with an unknown concept $c \colon X \rightarrow \{0, 1\}$. Let $W$ be a hypothesis class, and $0 < \delta \leq 1/2$. 
We say $D$ is $(\epsilon, \delta, W)$-\emph{representative} (or $(\epsilon, \delta, W)$-learnable) \emph{of} $\P$ if and only if the relation

\begin{equation}\label{eq:vcsamplesize}
\sup{f(\cdot) \in W}\left\vert\text{Pr}_{x\sim \P}[f(x) \neq c(x)] - \frac{1}{\size{D}} \left(\sum_{\langle x_i, y_i \rangle \in D} \mathbbm{1}[f(x_i) \neq y_i ]\right) \right\vert \leq \epsilon
\end{equation}

\noindent holds for a given $0 < \epsilon \leq 1/2$, with probability at least $1-\delta$.
\end{definition}

\defref{erepresentativeness} captures a dataset $D$ and hypothesis class $W$'s ability to capture $\P$. It upper-bounds the difference between the true error and the empirical error rate as incurred by the best model $f(\cdot) \in W$ when trained on $D$. 
It is also referred to as $\epsilon$-\emph{representativeness} \newcite{ShalevUML}. 
Following a result by Hanneke \newcite{Hanneke}, an equivalent definition can be obtained with the minimum sample size for binary function classes of finite VC dimension \cite{VC}.\footnote{The result by Hanneke \newcite{Hanneke} concerns the minimum size of $D$ such that \eqref{errorfunction} holds. It follows that if the sample size doesn't meet that bound, the probability distribution is not $(\epsilon, \delta, W)$-learnable. We revisit this statement in \secref{proofs}.} 

For notational convenience, throughout this paper we adopt a narrower version of this problem and consider our hypothesis space to be a finite set of valid \emph{weight} assignments $w \in W$, $W = \{w_1, \dots, w_k\}$, to a fixed measurable function of the form $f \colon X \times W \rightarrow Y$, and assume that the support of $\P$ is the instance space $X$. 

For example, one such fixed $f(\cdot)$ could be a two-layer neural network accepting inputs $x \in X$, with constant biases $b_1, b_a$, and constrained to have trainable weights from a finite set, say only from $\{\langle w_1, w_a\rangle, \langle w_2, w_b\rangle, \langle w_3, w_c\rangle\}$. 
Hence the hypothesis space $W$ is precisely that set of weights, and one such weight assignment could be:
\begin{equation}
    f(x, \langle w_1, w_a\rangle) = \text{Clamp}\left(w_1 \cdot \text{ReLU}\left(f( w_a \cdot x + b_a)\right) + b_1\right), 
\end{equation}
\noindent where $\text{ReLU}(x) = \max{(0, x)}$ and $\text{Clamp}(x) = \min{(\text{ReLU}(x), 1)}$. 

We also expand the definition of a training algorithm to include an extra set of \emph{hyperparameters} $\theta \subset \mathbb{R}$, so that $\train \colon W \times X \times Y \times \theta \rightarrow W$. 
This set is not learned by any $f(\cdot; w)$, but influences the error rate that can be attained by this model on $E$. 
Remark that these conventions are only for practical purposes, and do not affect the mathematics governing the problem. 

Finally, we use $\B_\rho(x)$ to denote to a $m$-ball of radius $\rho$ and centered at some $x \in \mathbb{R}^m$; and \emph{reach} $\mu$ to refer to the largest number such that any point at a distance $z$ from a compact Riemannian manifold $\X \subset \mathbb{R}^m$, has a unique nearest point in $\X$. 
This quantity, as introduced by Federer \newcite{Federer}, can be interpreted as the amount of curvature of $\X$, and some authors also call it the condition number. 
Throughout this paper we assume that all sets involved are measurable, and $\P$ is supported on some compact Riemannian submanifold of a Euclidean space.\footnote{Real data tends to be high-dimensional, sparse, and not necessarily lying on a manifold. See Niyogi et al. \newcite{NiyogiSmaleWeinberger} and \secref{conclusion} for a discussion on this assumption. } 
This last assumption allows us to introduce the $\tau$-function in \secref{taufunction}, which is critical in our development of Agora. Prior to that, however, we state our problem and its constraints.

\subsection{Problem Statement}\label{sec:statement}

Suppose we have two datasets, $D = \{\langle x_i, y_i \rangle \colon x_i \sim \P, y_i = c(x_i) \}_{i \in [1, n]}$ and $E = \{\langle x_i, y_i \rangle \colon x_i \sim \P, y_i = c(x_i) \}_{i \in [1, p]}$, $E \cap D = \emptyset$, sampled from $\P$ and labeled with an unknown, but fixed, concept class $c$. 
Also suppose $E$ is $(\epsilon, \delta, W)$-representative of $\P$ for a given $\epsilon, \delta$ and a finite, nonempty $W$. 

In this paper we are concerned with the case when $D$ is not $(\epsilon, \delta, W)$-representative of $\P$ for said $\epsilon$, $\delta$, and $W$. We are tasked with returning a hypothesis from $W$ that satisfies \eqref{errorprobfunction}, but we are unable to sample and label more data. Moreover, it is not possible to switch $D$ and $E$ due to---for example---a changing evaluation dataset whose only invariant over time is that it remains $(\epsilon, \delta, W)$-representative of $\P$.

Remark that the scenario where neither $D$ nor $E$ are $(\epsilon, \delta, W)$-representative of $\P$ is readily seen to be impossible to solve\footnote{There is insufficient data to provide a meaningful answer in polynomial time.} and we do not consider it further. 

Our proposed algorithm to solve this problem relies on a mechanism for data augmentation known as the $\tau$\emph{-function}.

\subsection{The $\tau$-function}\label{sec:taufunction}

The $\tau$-function, as introduced in \defref{taufunction} below, is Agora's main device for data augmentation, and it can be seen as a noisy sampling strategy. 
Before formally stating its definition, we provide some intuition for it. 

Since we assume throughout this paper that the support of $\P$, $X$, lies on a compact Riemannian manifold $\X \subset \mathbb{R}^m$, we are able to exploit its structure to generate new points. Suppose we are given a point $x \in X$. The $\tau$-function builds an $m$-ball of radius $\rho/4$ around $x$, and returns a new point $\tilde{x}$ sampled uniformly at random from the $m$-ball. 

The $\tau$-function is noisy since we do not construct the ball on $\X$, but on a separate manifold $\Z \subset \mathbb{R}^m$ such that $\X \subset \Z$. Under some technical assumptions regarding the radius $\rho$, the reach of $\X$ and $\Z$, and the overlap between both manifolds, we prove in \secref{proofs} that, if the centers of the $m$-balls belong to a dataset $(\epsilon, \delta, W)$-representative of $\P$, this approach rebuilds $\X$ up to homology. 
Agora uses this output, along with the labeler, to construct a dataset that is $(\epsilon, \delta, W)$-representative of $\P$. 

\begin{definition}[The $\tau$-function]\label{def:taufunction}
Let $D_X = \{x_1, \dots, x_n \}$ be a set sampled from a probability distribution $\P$ supported on a compact Riemannian manifold $\X \subset \mathbb{R}^m$ with reach $\mu > 0$. 

Let $\Z$ be a compact Riemannian manifold such that $\Z \subset \mathbb{R}^m$ and $\X \subset \Z$. 
Let $\Q_{x, \rho}$ be a uniform probability distribution supported on a set $Z_{x, \rho} = \{\tilde{x} \colon \tilde{x} \in \B_{\rho/4}(x) \land \tilde{x} \in \Z \land \tilde{x}\neq x\}$, for some $\rho > 0$. 

Let $Z = \bigcup_{x \in D_X} Z_{x, \rho}$. 
The $\tau$-function, parametrized by $\rho$, takes in elements $x \in D_X$, and returns a new element drawn from $\Q_{x, \rho}$,

\begin{equation}
\tau^{Z}_\rho \colon D_X \rightarrow Z.
\end{equation}

\end{definition}

While \defref{taufunction} is a topological object, from the point of view of machine learning it is a sampling function: it returns elements from a manifold sufficiently close to our original problem statement, and we use these (noisy) elements to be able to reconstruct it. 
As mentioned earlier, it is designed to abstract out the synthetic data generation aspect of training a model (i.e., it is typically a LLM or a grammar), and prove formal bounds about its performance.

\section{The Agora Algorithm}\label{sec:algorithm}

Agora alternates between data augmentation and hyperparameter pruning. It is displayed in \algref{merlinalg}. 
Throughout this section and the rest of the paper, we use $\Theta = \{\theta_1, \dots, \theta_n\}$ to denote a (finite) set of hyperparameter sets for a choice of training algorithm, and $D$ and $E$ for the training and evaluation datasets. 

At each iteration, Agora selects the highest-performing Timaeus with respect to the evaluation set $E$ and the (available) sets of hyperparameters in the search space $\Theta$. It then augments the training set based on the mistakes made by this high-performing model on $E$. The augmentation is carried out by invoking the $\tau$-function and labeling the new points with Socrates. Finally, it removes the lowest-performing hyperparameters from $\Theta$. 

Remark that the augmentation (and hence the training of Timaeus) is \emph{not} done by training on the mistakes for the evaluation set, but instead by sampling similar points with the $\tau$-function. 

\begin{algorithm}[h]
\caption{The Agora algorithm}\label{alg:merlinalg}
\begin{algorithmic}[1]
   \STATE {\bfseries Input:} Timaeus $f$, Socrates $S$, training dataset $D$, evaluation dataset $E$, hyperparameter sets $\Theta$, $\tau$-function $\tau^{Z}_\rho(\cdot)$

   \STATE $a^{*} \gets - \infty$
   \STATE $w^* \gets \emptyset$

   \WHILE{$\vert \Theta \vert \geq 1$}

   \STATE $Q \gets \{\}$
    \item[]\item[]
   \COMMENT{\textit{Step $1$: Select the highest-performing Timaeus.}}
   \FOR{$\theta_i \in \Theta$} \label{lst:line:startfor}
     \STATE $w^*_i \gets$ \train$(f, D, \theta_i)$ \label{lst:line:trainstatement}
     \STATE $a_i \gets $ acc$(f(\cdot; w^*_i), E)$ %
     \STATE $Q \gets Q \cup \{\langle a_i, \theta_i \rangle \}$

     \IF{$a_i > a^{*}$} \label{lst:line:maxstatement}
       \STATE $w^* \gets w^*_i$
       \STATE $a^{*} \gets a_i$
     \ENDIF
   \ENDFOR \label{lst:line:endfor}
   \item[]\item[]
   \COMMENT{\textit{Step $2$: Augment the dataset.}}
   \STATE $M \gets \{\tau^{Z}_\rho(x) : \langle x ,y \rangle \in E \land f(x; w^{*}) \neq y \land \tau^{Z}_\rho(x) \not\in E \}$ \label{lst:line:sortstatement}
   \STATE $D \gets D \cup \{\langle \tilde{x}, S(\tilde{x}; w^S) \rangle \colon \tilde{x}\in M \}$ \label{lst:line:checkstatement}
    \item[]\item[]
   \COMMENT{\textit{Step $3$: Prune the lowest-performing hyperparameters.}}
   \STATE Sort $Q$ in non-decreasing order with respect to $a_i$. \label{lst:line:sortqstatement}
   \STATE $\theta_{0} \gets$ Select the first hyperparameter set of $Q$.
   \STATE $\theta_{0,j} \gets$ First element of the longest uninterrupted sequence of contiguous hyperparameters in $Q$, starting in $\theta_{0}$, and breaking ties consistently. \label{lst:line:selectstatement}
    \item[]
   \STATE $\Theta \gets \{\theta_t \colon \theta_{0,j} \not\in \theta_t,\; \forall \theta_t \in \Theta\}$ \label{lst:line:removestatement}

   \ENDWHILE
   \RETURN $w^*$

\end{algorithmic}
\end{algorithm}

Formally, Agora takes as an input a tuple $\langle f, S, D, E, \Theta, \tau^{Z}_\rho \rangle$, where:
\begin{itemize}
    \item $f(\cdot; \cdot)$ is an untrained, fixed model (Timaeus) $f\colon X \times W \rightarrow Y$,
    \item $S(\cdot; w^S)$ is a trained model (Socrates) $S\colon X \times \{w^S\} \rightarrow Y$,
    \item $D$ and $E$ are datasets, such that $D = \{\langle x_i, y_i \rangle \colon x_i \sim\P, y_i = c(x_i) \}_{i \in [1, n]}$, $E = \{\langle x_i, y_i \rangle \colon x_i \sim\P, y_i = c(x_i) \}_{i \in [1, p]}$. Also, $D \cap E = \emptyset$; $E$ is drawn i.i.d. from $\P$; and $E$ is $(\epsilon, \delta, W)$-representative of $\P$ for $W$ and some, perhaps unknown, $\epsilon$ and $\delta$. 
    \item $\Theta = \{\theta_1, \dots, \theta_t \}$ is the search space, comprised of hyperparameter sets $\theta_i = \{\theta_{i,1}, \dots \theta_{i, u} \colon \theta_{i, j} \in \mathbb{R}\}$, and
    \item $\tau^{Z}_\rho(\cdot)$ is a $\tau$-function parametrized by some $\rho$ and $Z = \bigcup_{\langle x, y \rangle \in E} Z_{x, \rho}$.
\end{itemize}
Agora also has access to a training subroutine \train$(\cdot, \cdot, \cdot)$ as described in \secref{motivation}. 

At the $k^{\text{th}}$ iteration, the algorithm maintains a list of the current (yet to be pruned) sets of hyperparameter sets, $\Theta^{(k)} \subset \Theta$. While $\Theta$ is nonempty, Agora executes three steps: select Timaeus, augment the dataset, and prune $\Theta^{(k)}$. We describe each of these steps below: 

\begin{enumerate}
    \item \emph{Select Timaeus} (\linestwosref{startfor}{endfor} in \algref{merlinalg}):
    \begin{itemize}
        \item Train $\size{\Theta^{(k)}}$ Timaeus models $f(\cdot; w_i^{*,(k)})$ on $D^{(k)}$ corresponding to every $\theta_i \in \Theta^{(k)}$, and select the highest-accuracy model, $f(\cdot; w^{*,(k)})$, as evaluated on $E$.
        \item Store all pairs of accuracies and corresponding $\theta_i$ for the current iteration in a list $Q^{(k)}$.
    \end{itemize}
    \item \emph{Augment the dataset} (\linestwosref{sortstatement}{checkstatement} in \algref{merlinalg}):
    \begin{itemize}
        \item Create a set of all the mistakes done by the highest-accuracy model, $M^{(k)}$.
        \item Call the $\tau$-function on every member of $M^{(k)}$. 
        \item Label every new point $\tilde{x} \in M^{(k)}$ with Socrates. 
        \item Append the labeled $M^{(k)}$ set to the training set.
    \end{itemize}
    \item \emph{Prune the hyperparameter set} greedily (\linestwosref{sortqstatement}{removestatement} in \algref{merlinalg}):
    \begin{itemize}
        \item Sort $Q^{(k)}$ in non-decreasing order, with respect to the accuracy values.
        \item Let $\theta_0 \in \Theta^{(k)}$ be the first element in $Q^{(k)}$. Select the hyperparameter $\theta_{0,j} \in \theta_0$ belonging to the longest uninterrupted sequence of hyperparameter sets. Namely, $\theta_{0, j}$ is selected if it belongs to the longest uninterrupted sequence $\theta_{0, j} \in \theta_1, \theta_{0, j} \in \theta_2 \dots$, as described by $Q^{(k})$, and when compared to all other $\theta_{0, i} \in \theta_{0}$. 
        \item Remove all the hyperparameter sets from $\Theta^{(k)}$ that contain $\theta_{0,j}$, that is, $\Theta^{(k+1)} = \{\theta_t \colon \theta_{0,j} \not\in \theta_t,\; \forall \theta_t \in \Theta^{(k)}\}$.
    \end{itemize}
\end{enumerate}

Under the correctness conditions stated in \secref{proofs}, Agora iteratively constructs a dataset that is $(\epsilon, \delta, W)$-learnable, by reconstructing the manifold (up to homology) on which lies the support of $\P$.

\begin{remark}
Training Timaeus, especially when it is a deep neural network, may be computationally costly, and dominate the other operations in Agora as the training dataset grows at each iteration. 
Precise time complexity bounds and their comparison to a simple search over all of $\Theta$ can be found in \secref{timebounds}. 
Experiments on its accuracy and runtime when compared to an enumeration over $\Theta$ can be found in \appref{experiments}. 
\end{remark}

\section{Correctness}\label{sec:proofs}

In this section we prove \thmref{maintheorem}, our main result, which states the correctness of Agora. 
It shows that the accuracy of a model trained under the final, expanded dataset obtained from Agora upper bounds all other models. Informally, and assuming as always that $D$ and $E$ are the training and evaluation datasets, and $\Theta$ a set of hyperparameter sets for Agora: 

\begin{inftheorem}[\textbf{\ref{thm:maintheorem}, Informal}]
Let $\text{acc}(f(\cdot; w^*), E)$ be the accuracy of a model on $E$, where $w^* \in W$ is the result of running Agora with this input. 

Let $\text{acc}(f(\cdot; w^e), E) = \mmax{\theta \in \Theta}\{acc(f(\cdot; w), E) \colon w = \train(f, D, \theta)\}$ be the accuracy of a model obtained from an enumeration over all $\Theta$. 
Then, if $D$ and $E$ have equal proportion of positive and negative labels, the following holds:

\begin{equation}\label{eq:ineqbounds1}
\text{acc}(f(\cdot; w^{*}), E) = r \cdot \text{acc}(f(\cdot; w^e), E),
\end{equation}
\noindent for $1 \leq r \leq 2\left(1 - \frac{1}{2^{\size{\Theta}}}\right)$. 

Let $D^*$ and $\theta^*$ be the final dataset and hyperparameter set from running Agora with this instance. 
Then the accuracy of the returned Timaeus model $f(\cdot; w^*)$ on $E$ upper bounds all the other hyperparameter sets $\theta_i \in \Theta$, when trained with the final dataset $D^*$, 

\begin{equation}\label{eq:hpobounds1}
\text{acc}(f(\cdot; w^*), E) \geq \mmax{\theta \in \Theta \setminus \{\theta^*\}}\{\text{acc}(f(\cdot; w), E) \colon w = \train(f, D^*, \theta)\}. 
\end{equation}
\end{inftheorem}

\subsection{Outline of the Proof}

To prove \thmref{maintheorem}, we must first show that a corpus reconstructed via the $\tau$-function is learnable, even under noisy scenarios where Socrates is not perfectly accurate. 
To do this, we will use our main technical lemma, \lemref{tauinvariant}, stated informally below:

\begin{inflemma}[\textbf{\ref{lem:tauinvariant}, Informal}]

Suppose $E$ is sampled i.i.d. from $\P$, and $(\epsilon, \delta, W)$-representative of $\P$ for some hypothesis class $W$ with VC dimension $d$, and some $\epsilon$ and $\delta$. 
Also suppose $D$ is not $(\epsilon, \delta, W)$-representative of $\P$ and $D \cap E = \emptyset$.

Then, if some technical conditions hold
then the dataset 
\begin{equation}
\tilde{D} = D \cup \{\langle \tilde{x}, \mathcal{O}(\tilde{x}; w^o) \rangle \colon \tilde{x} = \tau_\rho(x) \land \tilde{x} \not\in E \}
\end{equation}

\noindent is $(\epsilon, \delta, W)$-learnable, for any $x \in E$, some oracle $\mathcal{O} \colon X \times \{w^o\} \rightarrow Y$, and  $\tau$-function $\tau_\rho(\cdot)$. 
\end{inflemma}

\lemref{tauinvariant} shows that for suitable values of $\epsilon, \delta$, and $W$, a (training) dataset $D$ can be made $(\epsilon, \delta, W)$-learnable by generating points with a $\tau$-function, and sampling the evaluation set $E$. 
The technical conditions relate to the lower bounds of $\size{D}, \epsilon$ and $\delta$, as well as sufficient sampling iterations for $\tau$ to guarantee that reconstruction up to homology of the underlying (compact) Riemannian submanifold ensures learnability of the dataset. 

The core contribution of \lemref{tauinvariant} is then to bridge the gap between minimum sizes for learnability and minimum sizes for reconstructability of a manifold. 

With \lemref{tauinvariant}, and replacing the oracle with a noisy model like Socrates, we then prove the bounds from \eqref{ineqbounds1} in \lemref{periterationperf}. \eqref{hpobounds1} follows from \lemref{hyperparamremoval}. 

Hence this section is structured as follows: we begin by proving our main technical lemma (\secref{tauproofs}), and then use it to prove \lemstworef{periterationperf}{hyperparamremoval} (\secstworef{periterationperfproof}{hporemovalproof}). 
We then state and prove \thmref{maintheorem} in \secref{mainthmproof}.

\subsection{Proof of the Main Technical Lemma}\label{sec:tauproofs}

In this section we prove our main technical lemma, \lemref{tauinvariant}. 
Informally, it states that, for suitable values of $\epsilon, \delta$, and $W$, a dataset $D$ can be made $(\epsilon, \delta, W)$-learnable by generating points with a $\tau$-function, and sampling the evaluation set $E$. %

Prior to that, we provide some intuition behind \lemref{tauinvariant} and its supporting lemmas. 
First we show in \lemref{learnablesamplesize} that an open cover for $D$ is equivalent up to homology to the manifold $\X$ where $\P$ is supported on, iff $D$ is $(\epsilon, \delta, W)$-representative of $\P$. 
We extend this result in \lemref{homologouslemma} to show that if the covers of two distinct datasets are equivalent up to homology, then they are both $(\epsilon, \delta, W)$-representative of $\P$ if one of them also fulfills \lemref{learnablesamplesize}. 
Then we show in \lemref{tauworks} that reconstructing $\X$ up to homology is possible by perturbing the data from a known sample of said manifold---the validation set $E$. 
The proof for \lemref{tauinvariant} then follows. %

\begin{lemma}\label{lem:learnablesamplesize}
Let $D = \{\langle x_i, y_i \rangle \colon x_i \sim \P, y_i = c(x_i) \}_{i \in [1, n]}$ 
be a dataset sampled from a probability distribution $\P$ supported on a compact Riemannian submanifold $\X \subset \mathbb{R}^m$ with reach $\mu > 0$. 
Let $W$ be a hypothesis class with VC dimension $d$. 

Let $\rho$ correspond to a choice of $\rho/4$-covering number for $\X$ such that $0 < \rho < \mu/2$, and let
\begin{equation}
\Lambda_\rho = \left(\frac{\rho \sqrt{\pi}}{4}\right)\left(1 - \frac{\rho^2}{64\mu^2} \right)^{1/2}.
\end{equation}

Fix a $0 < \delta \leq 1/2$. 

Then, if 

\begin{align}
\epsilon &\in \BigOmega {\left(\frac{\Lambda_\rho^m}{(\frac{m}{2})!\vol{(\X)}}\right)^{1/2}}, \label{eq:lemmahomologyepsilon}\\
d &\in  \BigOmega{\log{\left(2^m\left(\frac{64 - \rho^2/\mu^2}{256 - \rho^2/\mu^2}\right)^{m/2}\frac{(\frac{m}{2})!\vol{(\X)}}{\Lambda_\rho^m}\right)}}, \label{eq:lemmahomologyvc}\\
\size{D} &\in \BigOmega{\frac{d + \log{(1/\delta)}}{\epsilon^2}} \label{eq:lemmahomologyd},
\end{align}

\noindent the homology of an open subset $\cup_{\langle x_i, y_i \rangle \in D} \B_\rho(x_i)$ equals the homology of $\X$, if and only if $D$ is $(\epsilon, \delta, W)$-representative of $\P$, with probability at least $1 - \delta$.

\end{lemma}
\begin{proof}
In \appref{homologyproof}. 
\end{proof}

\begin{lemma}\label{lem:homologouslemma}
Let $D = \{\langle x_i, y_i \rangle \colon x_i \sim \P, y_i = c(x_i) \}_{i \in [1, n]}$, $E = \{\langle x_i, y_i \rangle \colon x_i \sim \P, y_i = c(x_i) \}_{i \in [1, p]}$ be two datasets sampled from a probability distribution $\P$ supported on a compact Riemannian submanifold $\X \subset \mathbb{R}^m$ with reach $\mu > 0$. 

Assume $D$ is $(\epsilon, \delta, W)$-representative of $\P$ with a set of classifiers $W$ with VC dimension $d$ for some $\rho$, $d$, $\epsilon$ and $\delta$. %

Let $U_D = \cup_{\langle x_i, y_i \rangle \in D} \B_\rho(x_i)$ and $U_E = \cup_{\langle x_i, y_i \rangle \in E} \B_\rho(x_i)$ be two open covers of $\X$ with elements of $D$ (resp. $E$). 

Then, if the homology of $U_D$ equals the homology of $U_E$, then $E$ is also $(\epsilon, \delta, W)$-representative of $\P$ for the same set of classifiers, and values of $\rho$, $d$, $\epsilon$, and $\delta$.
\end{lemma}
\begin{proof}
It follows immediately from \lemref{learnablesamplesize}.
\end{proof}

\begin{lemma}\label{lem:tauworks}
Let $E = \{\langle x_i, y_i \rangle \colon x_i \sim \P, y_i = c(x_i) \}_{i \in [1, p]}$ be a dataset sampled i.i.d. from a probability distribution $\P$, whose support $X$ lies on a compact Riemannian submanifold $\X \subset \mathbb{R}^m$ with reach $\mu > 0$. 

Suppose $E$ is $(\epsilon, \delta, W)$-representative of $\P$ with a set of classifiers $W$ of VC dimension $d$, where $d$, $\size{E}$, $\delta$ and $\epsilon$ are as stated in \lemref{learnablesamplesize}. 

Let $\tau^{Z}_\rho(\cdot)$ be a $\tau$-function parametrized by $0 <\rho < \mu/2$, and assume that the domain $Z$ of $\tau^{Z}_\rho(\cdot)$ lies on a compact Riemannian submanifold $\Z \subset \mathbb{R}^m$ with the same reach $\mu > 0$, and such that $\X \subset \Z$. 

Let $\Delta = Z \setminus X$, and let $\tilde{E}^{(\kappa)}$ be a set constructed by sampling every element of $E$ with the $\tau$-function $\kappa$ times, 

\begin{equation}\label{eq:tildeedef}
\tilde{E}^{(\kappa)} = \bigcup_{j=1}^\kappa \{\tau^{Z}_\rho(x) \colon \langle x, y \rangle \in E \land \tau^{Z}_\rho(x) \not\in E\}.
\end{equation}

Then if 

\begin{align}
2\size{\Delta} &< \size{X} - 3 \size{E}\text{ and}\\
\kappa &\geq \left(\frac{\size{X} + \size{\Delta}}{\size{X} - \size{E}}\right) \ln{\left(\frac{\size{E}}{\delta}\right)},
\end{align}

\noindent $\tilde{E}^{(\kappa)}$ is $(\epsilon, \delta, W)$-representative of $\P$ with probability at least $1 - \delta$.
\end{lemma}
\begin{proof}
If an open cover of $\tilde{E}^{(\kappa)}$ has the same homology as an open cover of $E$, by \lemref{homologouslemma} this set is $(\epsilon, \delta, W)$-representative of $\P$. 
A full proof is in \appref{tauworksproof}. 
\end{proof}

We may now state and prove our main technical lemma:

\begin{lemma}\label{lem:tauinvariant}
Let $D = \{\langle x_i, y_i \rangle \colon x_i \sim \P, y_i = c(x_i) \}_{i \in [1, n]}$, $E = \{\langle x_i, y_i \rangle \colon x_i \sim \P, y_i = c(x_i) \}_{i \in [1, p]}$ be two datasets sampled from a probability distribution $\P$ supported on a compact Riemannian submanifold $\X \subset \mathbb{R}^m$ with reach $\mu > 0$. 

Suppose $E$ is sampled i.i.d. from $\P$, and $(\epsilon, \delta, W)$-representative of $\P$ for some hypothesis class $W$ with VC dimension $d$, and some $\epsilon$ and $\delta$. 
Also suppose $D$ is not $(\epsilon, \delta, W)$-representative of $\P$ and $D \cap E = \emptyset$. 

Let $\mathcal{O} \colon X \times \{w^o\} \rightarrow Y$ be an oracle with VC dimension $\size{X}$, and  
let $\tau^{Z}_\rho(\cdot)$ be a $\tau$-function parametrized by $0 <\rho < \mu/2$; where $\rho$ corresponds to the minimum $\rho/4$-covering number of $\X$, and such that $Z$ lies on $\X$. 

Then, if the conditions from \lemstworef{homologouslemma}{tauworks} hold; and $d$, $\size{E}$, $\delta$ and $\epsilon$ are as stated in \lemref{learnablesamplesize}, then the dataset 
\begin{equation}
\tilde{D} = D \cup \{\langle \tilde{x}, \mathcal{O}(\tilde{x}; w^o) \rangle \colon \tilde{x} = \tau^{Z}_\rho(x) \land \tilde{x} \not\in E \}
\end{equation}

\noindent is $(\epsilon, \delta, W)$-learnable, for any $x \in E$. 
\end{lemma}

We are now ready to complete the proof of \lemref{tauinvariant}: 
\begin{proof}

It follows from \lemref{tauworks}. 
If $\epsilon, \delta$, and $d$ remain fixed, then the only variable required to obtain $(\epsilon, \delta, W)$-representativeness of $\P$ is the size of $\tilde{D}$. 
Suppose, without loss of generality, $\size{D} = \size{E} - 1$. 

Given that the $\tau$-function has its domain lying on the same manifold as $X$, the first sample of $\tau^{Z}_\rho(x)$ belongs to $\P$, and is correctly labeled by $\mathcal{O}(\tau^{Z}_\rho(x);w^o))$. 

The representativeness of this dataset follows from \lemstworef{learnablesamplesize}{homologouslemma}. Let $F = \{\langle \tilde{x}, \mathcal{O}(\tilde{x}; w^o) \rangle \colon \tilde{x} = \tau^{Z}_\rho(x) \land \tilde{x} \not\in E \}$. 
By \lemref{homologouslemma}, a cover of $\tilde{D} = D \cup F$ has the same homology as a cover of $E$, and by \lemref{learnablesamplesize} $\tilde{D}$ is $(\epsilon, \delta, W)$-representative of $\P$. %

Moreover, note that the VC dimension of $W$ remains unchanged, but adding a new point to $D$ lowers the Rademacher complexity of this problem,

\begin{equation}
\sqrt{\frac{2d\ln{(\size{D} + \size{F})}}{\size{D} + \size{F}}} < \sqrt{\frac{2d\ln{(\size{D})}}{\size{D}}}.
\end{equation}

\noindent This concludes the proof.

\end{proof}

It is important to highlight that equivalence up to homology is a weak relation, in comparison to, say, equivalence up to homeomorphism. Unfortunately, this is an undecidable problem for dimensions larger than $4$ \cite{Markov}.

\subsection{Proof of \lemref{periterationperf}}\label{sec:periterationperfproof}
Prior to proving our lemma, we introduce one additional definition:

\begin{definition}[Perfect Memory of a Classifier]\label{def:perfectmem}
Let $D \subset X \times Y$ be a dataset, $\theta \subset \mathbb{R}$ a hyperparameter set, and $f \colon X \times W \rightarrow Y$ a classifier, such that they are inputs to a training algorithm \train$ \colon W \times X \times Y \times \theta \rightarrow W $. 
Suppose the algorithm returns a weight set $w^D$, and acc$(f(\cdot; w^D), D) = a$. 

Let $D' = D \cup \{\langle x, y \rangle\}$ for any $\langle x, y \rangle$ from $X\times Y$, such that $\langle x, y \rangle\not\in D$. 
Let $w^{D'} = \train(f, D', \theta)$, and $\text{acc}(f(\cdot; w^{D'}), D') = a'$. 

We say that $f(\cdot; \cdot)$ has \emph{perfect memory with} \train$(\cdot, \cdot, \cdot)$ \emph{and} $\theta$ if the following relation holds:

\begin{equation}
a - \frac{1}{\size{D} + 1} \leq a' \leq a + \frac{1}{\size{D} + 1}.
\end{equation}

\end{definition}

\defref{perfectmem} is needed to simplify the proofs, and, informally, it means that the accuracy of a given model on the previously seen points will not change when augmenting the dataset and retraining it. 
Although it is mostly a theoretical object, applied analogies can be found in models designed to overcome catastrophic forgetting (for example, with elastic weight consolidation \cite{Catastrophic}); or, more practically, these with sufficient capacity to learn the entire distribution. 
We discuss in detail in \secref{conclusion} to which extent this assumption is reasonable.

\begin{lemma}\label{lem:periterationperf}
Let $\langle f, S, D, E, \Theta, \tau^{Z}_\rho \rangle$ be an input to Agora, where $D$ and $E$ are two datasets  $D = \{\langle x_i, y_i \rangle \colon x_i \sim \P, y_i = c(x_i) \}_{i \in [1, n]}$, $E = \{\langle x_i, y_i \rangle \colon x_i \sim \P, y_i = c(x_i) \}_{i \in [1, p]}$ sampled from a probability distribution $\P$ whose support $X$ lies on a compact Riemannian submanifold $\X \subset \mathbb{R}^m$ with reach $\mu > 0$. 

Suppose $f(\cdot; \cdot)$ has perfect memory with \train$(\cdot, \cdot, \cdot)$ and all $\theta \in \Theta$. 
Assume that $Z$ lies on $\X$, and that $\rho$ corresponds to the minimum $\rho/4$-covering number of $\X$. 

Assume $D$ and $E$ have equal proportion of positive and negative labels. 
If Socrates has accuracy acc$(S(\cdot; w^S),F) \geq 2/3$, $\forall F \subset X \times Y$, 
then the accuracy of $f(\cdot; \cdot)$ on $E$ at the end of the $k^{\text{th}}$ iteration of Agora is given by:

\begin{equation}
\text{acc}^{(k)}(f(\cdot; w^{*, (k)}), E) \geq 1 - \frac{1}{2^k} .
\end{equation}

\end{lemma}
\begin{proof}
By \lemref{tauinvariant}, under the lemma's conditions augmenting the dataset with points similar to the ones in $E$ improves the performance of $f(\cdot; \cdot)$. The per-iteration accuracy and bounds for Socrates follow by induction on the number of mislabeled points by a randomized version of $f(\cdot; \cdot)$ on $E$. 
A full proof is in in \appref{periterationperfproof}.
\end{proof}

\subsection{Proof of \lemref{hyperparamremoval}}\label{sec:hporemovalproof}
\begin{lemma}\label{lem:hyperparamremoval}
Let $\langle f, S, D, E, \Theta, \tau^{Z}_\rho \rangle$ be an input to Agora, where $D$ is a dataset  $D = \{\langle x_i, y_i \rangle \colon x_i \sim \P, y_i = c(x_i) \}_{i \in [1, n]}$ sampled from a probability distribution $\P$ supported on a compact Riemannian submanifold $\X \subset \mathbb{R}^m$ with reach $\mu > 0$. 

Suppose that Timaeus has perfect memory with \train$(\cdot, \cdot, \cdot)$ and all $\theta \in \Theta$. 
Assume that $Z$ lies on $\X$, and that $\rho$ corresponds to the minimum $\rho/4$-covering number of $\X$. 

Assume $D$ and $E$ have equal proportion of positive and negative labels. 

Let $\theta^*$ be the final hyperparameter set for the entire run of Agora, with a corresponding trained model $f(\cdot; w^* \in W)$. Let $D^*$ be the final training set. 
Then, for any $\theta_{i} \in \Theta$, if Socrates has accuracy acc$(S(\cdot, w^S), F) \geq 2/3$, $\forall F \subset X \times Y$, 
the following holds:

\begin{equation}
\text{acc}(f(\cdot; w^*), E) \geq \text{acc}(f(\cdot; w_{i}), E),
\end{equation}

\noindent where $w_{i} \in W$ (resp. $w^*$) is the output of \train$(f, D^*, \theta_{i})$ (resp. $\theta^*$). 
\end{lemma}
\begin{proof}
In \appref{hyperparamremovalproof}, by an application of \lemref{periterationperf}. 
\end{proof}

\subsection{Proof of \thmref{maintheorem}}\label{sec:mainthmproof}

We are now ready to state and prove our main result:

\begin{theorem}\label{thm:maintheorem}
Let $\langle f, S, D, E, \Theta, \tau^{Z}_\rho \rangle$ be an input to Agora, where $D$ and $E$ are two datasets $D = \{\langle x_i, y_i \rangle \colon x_i \sim\P, y_i = c(x_i) \}_{i \in [1, n]}$, $E = \{\langle x_i, y_i \rangle \colon x_i \sim\P, y_i = c(x_i) \}_{i \in [1, p]}$  sampled from a probability distribution $\P$ whose support $X$ lies on a compact Riemannian submanifold $\X \subset \mathbb{R}^m$ with reach $\mu > 0$.  

Suppose $f \colon X \times W \rightarrow Y$ has perfect memory with \train$(\cdot, \cdot, \cdot)$ and all $\theta \in \Theta$. 
Assume the domain of the $\tau$-function lies on $\X$, that $\rho$ corresponds to the minimum $\rho/4$-covering number of $\X$, and that Socrates has a lower-bound acc$(S(\cdot; w^S), F) \geq 2/3$ for all $F \subset X \times Y$.

Let $\text{acc}(f(\cdot; w^*), E)$ be the accuracy of a model on $E$, where $w^* \in W$ is the result of running Agora with this input.

Let $\text{acc}(f(\cdot; w^e), E) = \mmax{\theta \in \Theta}\{acc(f(\cdot; w), E) \colon w = \train(f, D, \theta)\}$ be the accuracy of a model obtained from an enumeration over all $\Theta$. 

Then, if $D$ and $E$ have equal proportion of positive and negative labels, the following holds:

\begin{equation}\label{eq:ineqbounds}
\text{acc}(f(\cdot; w^{*}), E) = r \cdot \text{acc}(f(\cdot; w^e), E),
\end{equation}

\noindent for $1 \leq r \leq 2\left(1 - \frac{1}{2^{\size{\Theta}}}\right)$. 

Moreover, let $D^*$ and $\theta^*$ be the final dataset and hyperparameter set from running Agora with this instance. Then the accuracy of the returned Timaeus model $f(\cdot; w^*)$ on $E$ upper bounds all the other hyperparameter sets $\theta_i \in \Theta$, when trained with the final dataset $D^*$, 

\begin{equation}\label{eq:hpobounds}
\text{acc}(f(\cdot; w^*), E) \geq \mmax{\theta \in \Theta \setminus \{\theta^*\}}\{\text{acc}(f(\cdot; w), E) \colon w = \train(f, D^*, \theta)\}. 
\end{equation}
\end{theorem}
\begin{proof}
\eqref{ineqbounds} follows from \lemref{periterationperf}. 
It is equivalent to noting that an enumeration is equivalent to the first iteration of Agora, and using \lemref{periterationperf} to assert that further iterations improve the accuracy of the Timaeus model. 
\eqref{hpobounds} follows immediately from \lemref{hyperparamremoval}. %
\end{proof}

Note that the bounds obtained in \thmref{maintheorem} are fairly loose, and can be tightened on a model-by-model basis. 

\begin{remark}
In practice we might not have access to the conditions from \lemref{tauinvariant}---namely, a $\tau$-function with $\Z = \X$ and $\rho$ being the minimum $\rho/4$-covering number of $\X$, as well as an oracle. 
In the case where the $\tau$-function does not fulfill these characteristics, the bounds still hold as long as the conditions of \lemref{tauworks} are met, as it is a probabilistic argument bounded by $\kappa$. 
The lemmas used to prove the main theorem do not assume Socrates to be an oracle. 
\end{remark}
\begin{remark}
One implication of \lemref{tauworks} is that, if $\Z = \X$, the number of calls to $\tau^{Z}_\rho(\cdot)$ is minimal for any input problem, since, by definition, it will construct the minimum-sample size dataset required for $(\epsilon, \delta, W)$-learnability of $\P$ with the chosen classifier $f(\cdot; \cdot)$. 
\end{remark}

\section{Time Bounds}\label{sec:timebounds}

In this section we provide runtime bounds for Agora. 
We show in \thmref{timecomplexity} that the runtime for Agora under the conditions from \thmref{maintheorem} is polynomial on \train$(\cdot, \cdot, \cdot)$, the cardinalities of $\size{\Theta}$ and $\size{E}$, and the inference times for Timaeus and Socrates. We conclude this section by narrowing down these results in \corref{sgdcor} for a class of inputs that rely on stochastic gradient descent (SGD).

For notational simplicity, we use $\bar{f}$ to denote the number of steps required to obtain an output from a function $f(\cdot; \cdot)$ given a fixed-size input $x$ regardless of other parameters, $\bar{f} = \BigTheta{f(x; w)}$, $\forall w \in W$. 

\begin{theorem}\label{thm:timecomplexity}
Let $\langle f, S, D, E, \Theta, \tau^{Z}_\rho\rangle$ be an input to Agora, where $D$ and $E$ are two datasets $D = \{\langle x_i, y_i \rangle \colon x_i \sim \P, y_i = c(x_i) \}_{i \in [1, n]}$, $E = \{\langle x_i, y_i \rangle \colon x_i \sim \P, y_i = c(x_i) \}_{i \in [1, p]}$ sampled from a probability distribution $\P$ whose support $X$ lies on a compact Riemannian submanifold $\X \subset \mathbb{R}^m$ with reach $\mu > 0$. 

Suppose $f \colon X \times W \rightarrow Y$ has perfect memory with \train$(\cdot, \cdot, \cdot)$ and all $\theta \in \Theta$, and that acc$(S(\cdot; w^S), F) \geq 2/3$, $\forall F \subset X \times Y$. 
Also assume $Z$ lies on $\X$, $\rho$ corresponds to the minimum $\rho/4$-covering number of $\X$, and that the conditions from \lemref{tauworks} hold. 

Then if the runtime of \train$(\cdot, \cdot, \cdot)$ is bounded by $\operatorname{O}(\text{poly}(\bar{f}, \size{D}, \theta))$ and $D$ has an equal ratio of positive and negative labels, Agora terminates in 

\begin{equation}\label{eq:runtimecomplexity}
\BigO{\size{\Theta}^2\left(\log{\size{\Theta}} + 
\size{\theta}^2 + T_f\left(\size{D} + \frac{\size{E}}{2^{\size{\Theta}-1}}\right)\right) + 
\bar{S}\size{\Theta}\size{E}\left(1 - \frac{1}{2^{\size{\Theta}}}\right)}
\end{equation}

\noindent steps, where $T_f(\size{D}) \in \operatorname{\Omega}(\text{poly}(\bar{f}, \size{D}, \theta))$, $\forall \theta \in \Theta$ and $f(\cdot; \cdot)$. 

\end{theorem}
\begin{proof}
In \appref{timecomplexityproof}.
\end{proof}

So far we have assumed that the hypothesis class is able to learn, with probability $1 - \delta$, $\P$ in a polynomial number of steps. This may be an unrealistic assumption given that there is no guarantee that exists an algorithm fulfilling the conditions for $\train(\cdot, \cdot, \cdot)$.

We now focus on the case where Timaeus is a piecewise-continuous, real-valued function with its output mapped consistently to $\{0,1\}$. 
Due to the nonconvexity and discontinuities present in \eqsref{errorfunction}{accuracyfunction}, a carefully-chosen surrogate loss\footnote{See, for example Bartlett et al. \newcite{BartlettAndJordan}, and Nguyen et al. \newcite{NguyenWainwrightJordan}.} is often used along with a polynomial-time (or better) optimization algorithm, such as SGD.

If we choose the optimizer for \train$(\cdot, \cdot, \cdot)$ to be SGD, and the surrogate loss is an $L$-Lipschitz smooth function, bounded from below and with bounded stochastic gradients for some $G$, then \train$(\cdot, \cdot, \cdot)$ returns an $\epsilon$-accurate solution in $\operatorname{O}(\size{B}\bar{f}/\epsilon^c)$ steps; where $c \leq 2$ is a constant, $\size{B}$ is the batch size for some $B \subset D$, and $\eta$ is the learning rate. 
Most SGD-based training procedures present such a runtime, even in the non-convex setting \cite{UnifiedNguyen,Reddy}. Nguyen et al. \newcite{UnifiedNguyen} showed that, for a specific variant of SGD referred to as \emph{shuffling-type SGD}, this algorithm converges to some $\mathbb{E}[\size{\size{\nabla F(\cdot; w)}}^2] \leq \gamma$ with $c = 2/3$ whenever $\eta = \sqrt{\gamma}/(LG)$, for some (not necessarily convex) function $F$. Note that $\gamma$ may be a stationary point, and that, for an appropriate choice of loss function, $\gamma \propto \epsilon$. 

\begin{corollary}\label{cor:sgdcor}
Assume Timaeus is a piecewise-continuous, real-valued function, and that the rest of the conditions for \thmref{timecomplexity} hold. Also assume that \train$(\cdot, \cdot, \cdot)$ uses shuffling-type SGD as its optimizer.

Suppose that for every $\theta \in \Theta$ there exists a subset that encodes the learning rate $\eta$ and batch size $\size{B}$ for \train$(\cdot, \cdot, \cdot)$, $\{\size{B}_i, \eta_i\} \subset \theta_i;\; \forall \theta_i \in \Theta$. 
If the surrogate loss is is $L$-Lipschitz smooth, bounded from below and where all gradients are stochastic and bounded by some $G$, 
then Agora converges in 

\begin{equation}\label{eq:nnruntimecomplexity}
\BigO{\size{\Theta}^2\left(\log{\size{\Theta}} + 
\size{\theta}^2 + 
\bar{f}\left(\size{E}+ \frac{\size{B}}{(LG\zeta)^{2/3}}\right)\right) + 
\bar{S}\size{\Theta}\size{E}\left(1 - \frac{1}{2^{\size{\Theta}}}\right)}
\end{equation}
\noindent steps, where $B \subset D$, with $\size{B} = \mmax{\theta \in \Theta}\{\size{B} \in \theta\}$ and $\zeta = \mmin{\theta \in \Theta}\{\eta^2 \in \theta\}$.
\end{corollary}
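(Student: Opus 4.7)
The plan is to apply \thmref{timecomplexity} as a template and replace only its generic polynomial-time training bound $T_f(\size{D})$ with the specific runtime of shuffling-type SGD under the stated regularity assumptions on the surrogate loss. All the structural hypotheses of \thmref{timecomplexity}---perfect memory of Timaeus, a faithful $\tau$-function, the $2/3$-accuracy lower bound on Socrates, and the balanced-label assumption---carry over by hypothesis, so the iteration-count reasoning (at most one hyperparameter pruned per round, and at most $\size{\Theta}$ rounds) and the per-iteration decomposition of \eqref{periterationtime} are inherited unchanged. Only the cost of a single invocation of $\train(\cdot, \cdot, \cdot)$ needs re-derivation.

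Next I would invoke the convergence rate of \newcite{UnifiedNguyen} reproduced in the preamble: under $L$-Lipschitz smoothness, loss bounded from below, and stochastic gradients bounded by $G$, shuffling-type SGD with step size $\eta = \sqrt{\gamma}/(LG)$ attains $\mathbb{E}\size{\size{\nabla F(\cdot; w)}}^2 \leq \gamma$ in $O(\size{B}\bar{f}/\gamma^{2/3})$ steps. Since by assumption the learning rate, batch size, and random seed lie in $\pi \subset \theta$ for every $\theta \in \Theta$, rearranging the step-size relation to express $\gamma$ as a function of $\eta^2$ and taking the worst case across $\Theta$---the smallest squared learning rate $\zeta = \mmin{\theta \in \Theta}\{\eta^2 \in \theta\}$ and the largest batch size $\size{B} = \mmax{\theta \in \Theta}\{\size{B} \in \theta\}$---yields a uniform per-call bound of $O(\size{B}\bar{f}/(LG\zeta)^{2/3})$ that dominates the training cost at every iteration.

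Finally, I would substitute this bound in place of $T_f(\cdot)$ inside the summation \eqref{bigenchilada2}. The Socrates-inference contribution $\bar{S}\size{\Theta}\size{E}(1 - 2^{-\size{\Theta}})$ and the sorting/pruning term $O(\size{\Theta}^2(\log\size{\Theta} + \size{\theta}^2))$ are untouched, and the Timaeus-evaluation cost $\bar{f}\size{E}$ remains. The main subtlety---and the chief obstacle---is that, unlike the abstract $T_f$, the SGD runtime depends on $\size{B}$ and $\eta$ rather than on $\size{D^{(k)}}$ directly, so the iteration-to-iteration growth of the training set no longer feeds back multiplicatively into the training cost; this is what collapses the factor $\size{D} + \size{E}/2^{\size{\Theta}-1}$ appearing in \thmref{timecomplexity} to the additive form $\size{E} + \size{B}/(LG\zeta)^{2/3}$ inside \eqref{nnruntimecomplexity}. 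One must also verify that taking $\max$ and $\min$ over the full $\Theta$ (rather than over the shrinking $\Theta^{(k)}$) is legitimate, but this is immediate since $\Theta^{(k)} \subseteq \Theta$ for every $k$, so the worst-case per-call bound is monotone across iterations and can safely be pulled outside the outer summation, yielding exactly \eqref{nnruntimecomplexity}.
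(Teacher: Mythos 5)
Your proposal is correct and follows essentially the same route as the paper: it substitutes the shuffling-type SGD runtime $O(\size{B}\bar{f}/\epsilon^{2/3})$ for the abstract $T_f(\cdot)$ in the bound of \thmref{timecomplexity}, uses the inclusion of $\eta$, $\size{B}$, and the seed in each $\theta$ to justify that the search preserves the correctness invariants, and takes the worst case over $\Theta$ via $\zeta$ and the maximal batch size. Your additional remarks---that the training cost no longer scales with the growing $\size{D^{(k)}}$ and that the extremization over the full $\Theta$ dominates each shrinking $\Theta^{(k)}$---are sound and in fact make explicit two points the paper's own proof leaves implicit.
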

\begin{proof}
In \appref{nnruntimecomplexityproof}.
\end{proof}

\corref{sgdcor} is not necessarily applicable to all neural networks, since their convergence conditions strongly depend on the choices of optimizers, architecture, and hyperparameters. The ratio from \thmref{maintheorem} is hence conditional on $\Theta$, as neural networks, without accounting for finite-precision computing, belong to hypothesis classes of infinite size.

\subsection{Discussion}\label{sec:discussioncomp}

The main advantage of Agora comes from its data expansion step; from \thmref{maintheorem} we can see that it enables the Timaeus model to learn a training set that otherwise wouldn't be learnable with an enumeration of the search space. It is also efficient in terms of the needed sample size, as implied by the proof of \lemref{tauinvariant}. 

However, it can be seen from \thmref{timecomplexity} Agora is a costly algorithm to run. It is considerably slower than an enumeration of the search space---that is, picking the best $w$ from $\{ \text{acc}(f(\cdot; w), E) \colon w = \train(f, D, \theta)\; \forall \theta \in \Theta\}$. 

This can be seen from \linestworef{startfor}{endfor}. The first iteration of Agora is precisely this enumeration, and it runs in $\BigO{\size{\Theta}T_f(\size{D}) + \bar{f}\size{E}}$ steps. 
Agora executes these lines at most $\size{\Theta}$ times, in addition to having two terms dependent on the call to Socrates and the data expansion. The hyperparameter pruning step also contributes another $\log{\size{\Theta}}$ term. 

That being said, by our problem setting, an enumeration would be unable to learn the data satisfactorily. It follows that Agora is a good choice of algorithm when the data is not representative of the problem at hand, and unwise otherwise. 
In \appref{experiments} we perform a brief experimental comparison of Agora and an enumeration, and its outcome aligns with our theoretical predictions around both runtime and correctness.

\section{Concluding Remarks}\label{sec:conclusion}

Agora is an algorithm to obtain models under low-resource and ill-posed settings. It works by iteratively reconstructing the manifold where the underlying distribution is supported on, based on the mistakes done by the learner on the evaluation set. 
We show that this approach is provably able to generate a dataset that is representative of the distribution, which makes the problem easier to learn. 
More generally speaking, we showed under which conditions data expansion techniques, abstracted out here in the $\tau$-function, are effective.

A central part of our correctness results is the assumption that the support is sufficiently dense, and lies on or around a manifold with positive reach. These are fairly strong constraints---perhaps even unrealistic. The work by Genovese et al. \newcite{GenoveseEtAl} shows that a well-defined ridge can be topologically similar, in a certain sense, to said manifold. Similar to Niyogi et al. \newcite{NiyogiSmale}, this result is also resilient to sampling noise, and could be a stronger basis from which to characterize the $\tau$-function. While the constraint around the reach can be also weakened, it is known that a smooth compact submanifold of $\mathbb{R}^m$ always has positive reach \cite{Chazal,ChazalCurvature}. 
The density of the support is crucial for the results from this paper, as it is for all manifold learning methods. Density can be achieved by a dimensionality reduction step, which has the benefit of re-establishing the performance bounds at the cost of some preprocessing overhead.

Further work could deal with expansion of this work to other problems, as we primarily deal with binary classification; 
or applying our theoretical results to other areas of machine learning. 
For example, the correspondence between learnability and reconstructability of a manifold may be extended to neural network compression by setting the problem over a neuromanifold (see, e.g., \cite{amari}), and rephrasing network pruning as a manifold reconstructability problem. 

\begin{acknowledgements}
The author would like to thank the anonymous referees for their helpful comments, which improved the quality of this paper, and Q. Wang for their feedback on a draft version of this work. 
This paper is dedicated to the memory of the author's grandfather.
\end{acknowledgements}

\section*{Data Availability Statement}
This paper's experimental results use the Breast Cancer Wisconsin (Diagnostic) Dataset \cite{UWDataset}, freely available in UCI's Machine Learning Repository \cite{Dua}. 

\section*{Conflict of interest}
The author is affiliated with Microsoft Corporation and the University of York. 
This work was done while at Amazon Alexa and the University of Texas at Austin.

\bibliography{biblio}

\newpage
\appendix
\section*{Appendices}

\section{Proof of \thmref{maintheorem}}\label{app:maintheoremproof}

The proof for \eqref{ineqbounds} follows from \lemref{periterationperf}. 
\lemref{periterationperf} gives a closed-form solution to a lower-bound on the per-iteration accuracy of the model. We solve this equation for the worst-case scenario in Agora, when it runs for $\size{\Theta}$ iterations, and an enumeration. This can be done since \linestwosref{startfor}{endfor} in Agora are equivalent to said enumeration. %

Concretely, the right-hand side bound in \eqref{ineqbounds} is the worst-case scenario for Agora, that is, when $k = \size{\Theta}$. 
The left-hand side bound follows the case where $k=1$, since it is an enumeration. 
Hence:
\begin{align}
\text{acc}(f(\cdot; w^e), E) &\geq 1 - \frac{1}{2}, \\
\text{acc}(f(\cdot; w^{*}), E) &\geq 1 - \frac{1}{2^{\size{\Theta}}},
\end{align}
and therefore 
\begin{equation}
r = \frac{\text{acc}(f(\cdot; w^{*}), E)}{\text{acc}(f(\cdot; w^e), E)} \leq 2\left(1 - \frac{1}{2^{\size{\Theta}}}\right),
\end{equation}
as desired. 

\eqref{hpobounds} follows immediately from \lemref{hyperparamremoval}.

\section{Proof of \lemref{periterationperf}}\label{app:periterationperfproof}

We begin by lower-bounding the performance of a classifier with the notion of a \emph{random classifier}: 

\begin{definition}[The Random Classifier]\label{def:randomclassifier}
Let $D = \{\langle x_i, y_i \rangle \colon x_i \sim \P, y_i = c(x_i) \}_{i \in [1, n]}$ be a dataset sampled from $\P$ with $p$ positive labels and $1-p/\size{D}$ negative labels. A \emph{random classifier} $f$ outputs, when called, $1$ with probability $p/\size{D}$, and $0$ otherwise, regardless of its input.
\end{definition}

\defref{randomclassifier} is a lower-bound on the performance of any other model, as given by the training algorithm. This holds as long as the data remains balanced.\footnote{This statement does not hold in all cases. If the data were imbalanced across all $\P$, "dumber" models could actually perform better, e.g., guessing all zeros in a problem with $\size{X} - 1$ negative labels.}

Assume, for simplicity, $\Theta = \{\theta_1, \dots, \theta_k \colon \theta_i \cap \theta_j = \emptyset\; \forall \theta_i, \theta_j \}$, $f(\cdot; \cdot)$ is a random classifier, and $W = \{w\}$. 
The cases where $W$ is not a singleton or $f(\cdot; w)$ is not a random classifier follow immediately from \linestworef{maxstatement}{removestatement} and the proof for this base case.

Let $D^{(1)}$ be the dataset at the beginning of the first iteration of Agora. 
Since $f(\cdot; w)$ is a random classifier, its accuracy on $E$ is at least $a^{(1)}_f \geq 1/2$, and hence its error rate is $e^{(1)}_f \leq 1/2$.  

Let the dataset constructed by \linestworef{sortstatement}{checkstatement} be $M^{(1)}$. 
Then $\size{M^{(1)}} \leq \size{E}/2$, and hence 

\begin{equation}\label{eq:firststep}
\size{D^{(2)}} \leq \size{D} + \frac{\size{E}}{2}. 
\end{equation}

At the next iteration, the labels of every element in $\size{M^{(2)}}$ have a chance acc$(S(\cdot; w^S))$, $M^{(2)}) \geq 2/3$ of being correct, and hence the number of points that are misclassified is at most $\size{E \cap M^{(1)}}/2$, or

\begin{equation}\label{eq:secondstep}
\size{M^{(2)}} \leq \frac{\size{E}}{4},
\end{equation}

\noindent since the $\tau$-function has $\Z = \X$ and $\rho$ corresponds to the minimum $\rho/4$-covering number of $\X$. Hence each $\langle x_i, y_i \rangle \in E$ is classified correctly with probability $\geq 1/3$ in the case where Socrates correctly predicted the label for its corresponding point $\langle \tilde{x}_i, S(\tilde{x}_i; w^S)\rangle \in D^{(2)}$.
If, for some $\langle x_i, y_i \rangle \in E$, $f(x_i; w) = y_i$, but its corresponding point was incorrectly labeled by Socrates, it will not appear again in any $M^{(k)}$ due to the perfect memory of $f(\cdot; w)$. 

Generalizing the above, it can be seen that at any iteration $k$ the size of $M^{(k)}$ is given by 

\begin{equation}\label{eq:generalstep}
\size{M^{(k)}} \leq \frac{\size{E}}{2^k}.
\end{equation}

Induction on \eqref{generalstep} concludes the proof.

\section{Proof of \lemref{hyperparamremoval}}\label{app:hyperparamremovalproof}
By an application of Agora and \lemref{periterationperf}. Assume, for simplicity, that Timaeus is a random classifier as defined in \appref{periterationperfproof}.  

To begin, suppose that we run the algorithm on a separate instance $\langle f, S, D, E, \{\theta_{1}, \theta_{2}\}, \tau^{Z}_\rho \rangle$, such that $\size{\theta_{1} \cap \theta_{2}} \leq \min{(\size{\theta_1}, \size{\theta_2})} - 1$. 
Agora only runs for $k=2$ iterations, evaluating three models. At the first iteration, it obtains two models corresponding to $\theta_1$ and $\theta_2$, $f(\cdot; w^{(k=1)}_{1})$ and $f(\cdot; w^{(k=1)}_{2})$, and prunes out the lowest performing hyperparameter set---say, $\theta_{1}$. Let $f(\cdot; w^{*, (1)}_{2})$ be the trained model at $k=1$.

At the second iteration, it evaluates only one model, $f(\cdot; w^{(2)}_{2})$. 
Call the trained version $f(\cdot; w^{*, (2)}_{2})$. 
Since $f(\cdot; \cdot)$ has perfect memory, by \lemref{periterationperf} and \linestworef{maxstatement}{removestatement} we know that

\begin{equation}
\text{acc}(f(\cdot; w^{*, (1)}_{1}), E) \leq \text{acc}(f(\cdot; w^{*, (1)}_{2}), E) \leq \text{acc}(f(\cdot; w^{*, (2)}_{2}), E),
\end{equation}

\noindent and hence $\theta^{(2)} = \theta_2$ is the convergent (final) hyperparameter set for Agora for this input instance. 

Since the accuracy of $f(\cdot; \cdot)$ does not decrease when calling Socrates, we see that, across the entire run of the algorithm, the convergent hyperparameter set is not be removed. 
In other words, if $\theta_j$ is the aforementioned set, and $k$ the total number of iterations, for any $i < k$, 
\begin{equation}
\text{acc}(f(\cdot; w^{*, (k-i)}_{l}), E) \leq \text{acc}(f(\cdot; w^{*, (k)}_{j}), E),
\end{equation}

\noindent for any $\theta_{l} \in \Theta$, $\theta_{l} \neq \theta_{j}$. 

It follows that the accuracy of $f(\cdot; \cdot)$ on the convergent hyperparameter set, for any instance of Agora, upper bounds the other members of $\Theta$.

\section{Proof of \lemref{learnablesamplesize}}\label{app:homologyproof}

\paragraph{"If" direction:}
We show that an open set $U = \cup_{\langle x_i, y_i \rangle \in D} \B_\rho(x_i)$ that has the same homology as $\X$ is $(\epsilon, \delta, W)$-representative of $\P$, and that \eqref{errorprobfunction} holds when the conditions for the lemma are met. 

Fix an arbitrary confidence $0 < \delta \leq 1/2$. For notational convenience, let $D_X = \{x_i \colon \langle x_i, y_i \rangle \in D\}$. 

It was shown by Niyogi et al. \newcite{NiyogiSmale} that the minimum sample size $\size{D} = \size{D_X}$ needed to construct such a set $U$ with probability greater than $1-\delta$ has to be at least

\begin{equation}\label{eq:nwsbeta}
\size{D_X} > \beta(\rho/4)\left(\ln{\beta(\rho/8)} + \ln{(1/\delta)}\right),
\end{equation}
\noindent for $\beta(\rho) = \frac{\vol{(\mathcal{\X})}}{\cos^m{(\arcsin{(\frac{\rho}{2\mu})})}\vol{(\B^m_{\rho})}}$, and where $\mu$ is the condition number, $\text{dim}(\X) = m$, and $\rho$ the radius $0< \rho < \mu/2$ of the $m$-balls covering $\X$. 

Hanneke \newcite{Hanneke} also showed that the minimum sample size needed for \eqref{errorprobfunction} to hold with probability at least $1 - \delta $ is given by

\begin{equation}\label{eq:hanneke2}
\size{D_X} \geq c\left(\frac{d + \log{(1/\delta)}}{\epsilon^2}\right)
\end{equation}

\noindent for some constant $c$ and $0 < \epsilon, \delta \leq 1/2$. This bound is tight, but we are only concerned with the lower bound. 

Assume $\size{D_X}$ satisfies \eqref{nwsbeta}. Then there is an assignment of $\beta(\rho)$ that allows \eqref{hanneke2} to hold. 
Solving \eqref{hanneke2} for $\beta(\rho)$ with the ansatz:

\begin{align}
\epsilon &= \sqrt{\frac{c}{\beta(\rho/4)}}, \\
          d &= c'\log{\beta(\rho/8)},
\end{align}

\noindent where $c' = \ln{(2)}$, yields: 

\begin{align}
\epsilon &= \sqrt{c}\left(\frac{\cos^m{(\arcsin{(\frac{\rho}{8\mu}))}}\vol{(\B^m_{\rho/4})}}{\vol{(\mathcal{\X})}}\right)^{1/2}, \\
          d &= c'\log{\left(\frac{\vol{(\mathcal{\X})}}{\cos^m{(\arcsin{(\frac{\rho}{16\mu}))}}\vol{(\B^m_{\rho/8})} }\right)}.
\end{align}

 We refine this result, and obtain the same as \eqsref{lemmahomologyepsilon}{lemmahomologyvc} by noting that the volume $\vol{(\B^m_{\rho})}$ is given in the $m$-dimensional ambient space, which is Euclidean and equipped with the canonical Euclidean metric, and that $\Gamma(\frac{x}{2} + 1) = (x/2)!$: 

\begin{align}
\epsilon &= \sqrt{c}\left(\frac{\cos^m{(\arcsin{(\frac{\rho}{8\mu})})}\pi^{m/2}\left(\frac{\rho}{4}\right)^m}{(\frac{m}{2})!\vol{(\mathcal{\X})}}\right)^{1/2}, \\
          d &= c'\log{\left(\frac{\vol{(\mathcal{\X})}(\frac{m}{2})!}{\cos^m{(\arcsin{(\frac{\rho}{16\mu}))}}\pi^{m/2}\left(\frac{\rho}{8}\right)^m }\right)}.
\end{align}

Letting $\Lambda_\rho = \left(\frac{\rho \sqrt{\pi}}{4}\right)\left(1 - \frac{\rho^2}{64\mu^2} \right)^{1/2}$, we obtain

\begin{align}
\epsilon &\in \BigOmega {\left(\frac{\Lambda_\rho^m}{(\frac{m}{2})!\vol{(\X)}}\right)^{1/2}} \\
d &\in  \BigOmega{\log{\left(2^m\left(\frac{64 - \rho^2/\mu^2}{256 - \rho^2/\mu^2}\right)^{m/2}\frac{(\frac{m}{2})!\vol{(\X)}}{\Lambda_\rho^m}\right)}}
\end{align}

\noindent as desired. It follows that, when the homology of $\cup_{\langle x_i, y_i \rangle \in D} \B_\rho(x_i)$ equals the homology of $\X$, $D$ is $(\epsilon, \delta, W)$-learnable, if $\epsilon$ and $d$ satisfy \eqsref{lemmahomologyepsilon}{lemmahomologyvc}.

\paragraph{"Only if" direction:}

We show that a dataset that is $(\epsilon, \delta, W)$-representative of $\P$, with its size determined by \eqsref{lemmahomologyepsilon}{lemmahomologyvc} and \eqref{hanneke2} has the same homology as $\X$. 

Fix an arbitrary confidence $0 < \delta \leq 1/2$. 
Assume there exists an assignment of $\epsilon$ and $d$ that satisfies \eqref{hanneke2}, such that $D$ is $(\epsilon, \delta, W)$-representative of $\P$ under these conditions. 
Then

\begin{align}
\size{D} &> c\left( \frac{c'\log{\left(2^m\left(\frac{64 - \rho^2/\mu^2}{256 - \rho^2/\mu^2}\right)^{m/2}\frac{(\frac{m}{2})!\vol{(\X)}}{\Lambda_\rho^m}\right)} + \log{\frac{1}{\delta}}}{c_\epsilon\left(\frac{\Lambda_\rho^m}{(\frac{m}{2})!\vol{(\X)}}\right)} \right) \label{eq:datasetsize}
\end{align}

\noindent for some positive, nonzero constants $c \geq \ln{(2)}$, $\log{c'} = c_d \geq 1/8$, and ${c_\epsilon} \geq 1$ encoding the respective constants for \eqsref{lemmahomologyepsilon}{lemmahomologyvc}.

Niyogi et al. \newcite{NiyogiSmale} also showed that the confidence $\delta$ is lower-bounded by the $\rho/4$-packing number $z$, times the minimum probability $1- \alpha$ that the intersection of the cover with a nontrivial subset $F \subset D_X$, $\size{F} < \size{D_X}$ is empty, that is:

\begin{equation}\label{eq:subseteq}
z(1 - \alpha)^{\size{F}} \leq ze^{-\size{F}\alpha} \leq \delta
\end{equation}

\noindent for $\alpha = \mmin{x_i \in F} \vol{(\B_{\rho/4}(x_i)\cap \X)}/\vol{(\X)}$. Crucially, if $\size{F}$ is larger than a carefully-chosen lower bound, it follows that the homology of a cover of $F$ equals the homology of $\X$. 
From the same work we know that $\alpha$ and $z$ are invariants of the manifold, and are unrelated to the dataset itself. In particular,

\begin{align}
\frac{1}{\alpha} &\leq \frac{\vol{(\X)}}{\cos^m{(\arcsin{(\frac{\rho}{8\mu})})} \vol{(\B_{\rho/4})}}, \label{eq:alpha}\\
z &\leq \frac{\vol{(\X)}}{\cos^m{(\arcsin{(\frac{\rho}{16\mu})})} \vol{(\B_{\rho/8})}}. \label{eq:zeta}
\end{align}

Solving for $\size{F}$ in \eqref{subseteq}, we obtain:

\begin{equation}
\size{F} \geq \frac{1}{\alpha}\ln{\frac{z}{\delta}}, 
\end{equation}

\noindent which is precisely the statement from \eqref{nwsbeta}. Plugging in \eqref{datasetsize} in \eqref{subseteq}, along with the fact that $\size{D_X} \geq \size{F}$, it follows that:

\begin{equation}
\frac{1}{\alpha}\ln{\frac{z}{\delta}} \leq  c\left( \frac{\log{\left(c_d2^m\left(\frac{64 - \rho^2/\mu^2}{256 - \rho^2/\mu^2}\right)^{m/2}\frac{(\frac{m}{2})!\vol{(\X)}}{\Lambda_\rho^m}\right)} + \log{\frac{1}{\delta}}}{\left(c_\epsilon\frac{\Lambda_\rho^m}{(\frac{m}{2})!\vol{(\X)}}\right)} \right)
\end{equation}
\begin{equation}
\leq  \left(\frac{c}{\ln{(2)}}\right)\left(c_\epsilon\frac{(\frac{m}{2})!\vol{(\X)}}{\Lambda_\rho^m}\right) \ln{\left(c_d2^m\left(\frac{64 - \rho^2/\mu^2}{256 - \rho^2/\mu^2}\right)^{m/2}\frac{(\frac{m}{2})!\vol{(\X)}}{\delta\Lambda_\rho^m}\right)}.
\end{equation}

Our work is limited to show variable-per-variable bounds. 
Note, however, that from \eqsref{alpha}{zeta} we do not obtain any information with respect to the relationship of these bounds and the ones from \eqsref{lemmahomologyepsilon}{lemmahomologyvc}. Thus the best we may expect is that the bounds from \eqsref{alpha}{zeta} are at least a lower bound for \eqsref{lemmahomologyepsilon}{lemmahomologyvc}:

\begin{align}
z &\leq c_d2^m\left(\frac{64 - \rho^2/\mu^2}{256 - \rho^2/\mu^2}\right)^{m/2}\frac{(\frac{m}{2})!\vol{(\X)}}{\Lambda_\rho^m} \\
\frac{\vol{(\X)}}{\cos^m{(\arcsin{(\frac{\rho}{16\mu})})} \vol{(\B_{\rho/8})}} &\leq c_d2^m\left(\frac{64 - \rho^2/\mu^2}{256 - \rho^2/\mu^2}\right)^{m/2}\frac{(\frac{m}{2})!\vol{(\X)}}{\Lambda_\rho^m} \\
\frac{\vol{(\X)}}{\cos^m{(\arcsin{(\frac{\rho}{16\mu})})} \vol{(\B_{\rho/8})}} &\leq c_d2^m\left(\frac{64 - \rho^2/\mu^2}{256 - \rho^2/\mu^2}\right)^{m/2}\frac{(\frac{m}{2})!\vol{(\X)}}{\left(\frac{\rho \sqrt{\pi}}{4}\right)^m\left(1 - \frac{\rho^2}{64\mu^2} \right)^{m/2}}.
\end{align}

We drop the dependence on the volume of the manifold, and recall that $\vol{(\B_{\rho/8})}$ is defined over the tangent space, which is Euclidean.\footnote{ This last assumption is not necessary, given that the definition of the lower bound of $d$ in \eqref{lemmahomologyvc} implies a constant $c_d$ which can be made arbitrarily large with respect to the volume form of $\B_{\rho/8}$ on a chosen (oriented) manifold.}
Using this, along with the fact that $\cos{(\arcsin{(x)})} = \sqrt{1 - x^2}$, we obtain:

\begin{align}
\frac{1}{\cos{(\arcsin{(\frac{\rho}{16\mu})})}} &\leq c_d\left(\frac{64 - \rho^2/\mu^2}{256 - \rho^2/\mu^2}\right)^{1/2}\left(1 - \frac{\rho^2}{64\mu^2} \right)^{-1/2}, \\
1 &\leq 8(c_d), \\
\end{align}

\noindent which is true by the definition of $c_d$.

Similarly, for $\alpha$, we obtain another inequality:

\begin{align}
\frac{1}{\alpha} &\leq c_\epsilon\left(\frac{(\frac{m}{2})!\vol{(\X)}}{\Lambda_\rho^m}\right) \\
\frac{\vol{(\X)}}{\cos^m{(\arcsin{(\frac{\rho}{8\mu})})} \vol{(\B_{\rho/4})}} &\leq c_\epsilon\left(\frac{(\frac{m}{2})!\vol{(\X)}}{\left(\frac{\rho \sqrt{\pi}}{4}\right)^m\left(1 - \frac{\rho^2}{64\mu^2} \right)^{m/2}}\right) \\
\frac{1}{\cos^m{\left(\arcsin{(\frac{\rho}{8\mu})}\right)}} &\leq c_\epsilon\left(1 - \frac{\rho^2}{64\mu^2} \right)^{-m/2} \\
1 &\leq c_\epsilon,
\end{align}

which is true by the definition of $c_\epsilon$. 

It follows that if $D$ satisfies \eqref{hanneke2} with \eqsref{lemmahomologyepsilon}{lemmahomologyvc} and $c \geq \ln{(2)}$, the homology of $\cup_{\langle x_i, y_i \rangle \in D} \B_\rho(x_i)$ equals the homology of $\X$. 

\paragraph{Conclusion:}

From both parts it follows that $D$ is $(\epsilon, \delta, W)$-learnable if and only if the homology of a cover for $\X$ with elements of $D$, $\cup_{\langle x_i, y_i \rangle \in D} \B_\rho(x_i)$, equals the homology of $\X$, as long as $\epsilon$ and $d$ satisfy \eqsref{lemmahomologyepsilon}{lemmahomologyvc}. This concludes the proof.

\begin{remark}
A consequence of the second part of the proof of \lemref{learnablesamplesize} is that the Big-Omega notation can be substituted with an inequality. 
\end{remark}

\section{Proof of \lemref{tauworks}}\label{app:tauworksproof}

Our proof follows closely the techniques from Niyogi et al. \newcite{NiyogiSmale}, as this problem has a natural solution in terms of their homology-based framework.

Fix a probability $0 <\delta \leq 1/2$, and, for notational convenience, let $E_X = \{x_i \colon \langle x_i, y_i \rangle \in E\}$.

We begin by making two observations: first, by definition, $Z = X \cup \Delta$ lies on some manifold $\Z$, such that $\X \subset \Z$. 
It follows that, by \lemref{homologouslemma}, 
the homology of a cover realized by a subset $F \subset \tilde{E}^{(\kappa)}$ equals the homology of $\X$, provided $F$ is $(\epsilon, \delta, W)$-learnable. 

Second, the $\tau$-function samples elements within a radius $\rho/4$ of its argument; here some $x \in X$. We are guaranteed the existence of at least another $x' \in X$ in this $m$-ball, since $E_X$ is $\rho/2$-dense in $\X$; that is, $\forall x \in E_X$, $\exists x' \in \X.$ $\vert\vert x - x; \vert\vert_{\mathbb{R}^m} \leq \rho/2$. 

Suppose $\Delta = \emptyset$. 
For any $m$-ball centered at $x \in E_X$, from the density of $E_X$ it follows that the minimum size needed to obtain an element that fulfills the conditions of the lemma (w.h.p.) is given by $\size{E_X\cap \B_{\rho/4}(x)} < (1/2)\size{(X\setminus E_X) \cap \B_{\rho/4}(x)}$. Hence 

\begin{equation}\label{eq:minimumdelta}
\size{(E_X \cup \Delta) \cap \B_{\rho/4}(x)} < (1/2)\size{(X \setminus E_X) \cap \B_{\rho/4}(x)}
\end{equation}
\noindent is the minimum set size required when $\Delta \neq \emptyset$. 

We now formalize these observations and establish the number of tries $\kappa$ needed to obtain the centers for the subset $F$.

There are two natural partitions of $\tilde{E}^{(\kappa)}$: one corresponding to the results of every sampling round, as displayed in \eqref{tildeedef}; and another mapping the index of every $i^{\text{th}}$ element of $E_X$ to a set tabulating the result of the $j^{\text{th}}$ sampling with $\tau^{Z}_\rho(\cdot)$. 

Partition $\tilde{E}^{(\kappa)}$ based on the latter strategy, that is, $\hat{E}^{(i)} = \{\tau^{Z, (j)}_\rho(x_i) \colon x_i \in E_X \land j \in [1,\dots,\kappa]\}$, for $\hat{E}^{(i)} \subset \tilde{E}^{(\kappa)}$ and $\cup_{i \in \{1, \dots, \size{E_X}\}} \hat{E}^{(i)} = \tilde{E}^{(\kappa)}$. 
Let $A_i$ be the event that the $i^{\text{th}}$ subset of $\tilde{E}^{(\kappa)}$ under this partition contains solely elements of $E_X$ or $\Delta$, $\hat{E}^{(i)} \cap (X\setminus E_X) = \emptyset$. 

Via the union bound, we get:

\begin{align}\label{eq:fullunionbound}
\text{Pr}\left[\bigcup_{i=1}^{\size{E_X}} [A_i = 1] \right] &\leq \sum_i^{\size{E_X}} \text{Pr}[A_i = 1], \\
&\leq \sum_i^{\size{E_X}} \prod_{j=1}^{\kappa}\left(1 - \frac{\vol({(X \setminus E_X) \cap (\B_{\rho/4}(x_i)\setminus\{x_i\})})}{\vol{(Z \cap (\B_{\rho/4}(x_i)\setminus\{x_i\}))}}\right) \\
&\leq \sum_i^{\size{E_X}} \left(1 - \frac{\vol({(X \setminus E_X) \cap (\B_{\rho/4}(x_i)\setminus\{x_i\})})}{\vol{(Z \cap (\B_{\rho/4}(x_i)\setminus\{x_i\}))}}\right)^{\kappa} \label{eq:firstequation}, 
\end{align}

\noindent since all elements in $\Q_{x_i, \rho}$ are equally probable. 
Let 
\begin{equation}\label{eq:alphafortau}
\alpha = \mmin{i \in \{1, \dots, \size{E_X}\}}\frac{\vol({(X \setminus E_X) \cap (\B_{\rho/4}(x_i)\setminus\{x_i\})})}{\vol{(Z \cap (\B_{\rho/4}(x_i)\setminus\{x_i\}))}}.
\end{equation}

By relying on the $\rho/2$-density of $E$ on $\X$, we obtain $\alpha > 0$. The subspace of $\Z$ inside the cover realized by $E$ has the same condition number as $\X$, and we may then relax the geometry of this problem. We lower bound \eqref{alphafortau} with a counting argument: 
since all $\tau_{\rho}^Z{x_i}$ are obtained i.i.d. from $Z$ from a uniform probability distribution, assume that every draw for every $\B_{\rho/4}(x_i)$ is identical, at least with respect to $Z$. This is not a strong assumption since $\langle x_i, y_i \rangle \in E$ were also sampled i.i.d. 
We drop the dependence on the volume of $\B_{\rho/4}(x_i)$ by noting that we do not sample elements strictly from $X$. 
From \eqref{minimumdelta} and $\size{Z} = \size{X} + \size{\Delta}$,
we obtain: 

\begin{align}
\alpha &\geq \frac{\size{\{\tilde{x_i} \colon \tilde{x_i} \in (X \setminus E_X) \cap (\B_{\rho/4}(x_i)\setminus\{x_i\}) \}}}{\size{\{\tilde{x_i} \colon \tilde{x_i} \in Z \cap (\B_{\rho/4}(x_i)\setminus\{x_i\}\})}} \\
\alpha &\geq \frac{\size{X \setminus E_X}}{\size{Z}} \\
\alpha &\geq \frac{\size{X \setminus E_X}}{\size{X} + \size{\Delta}}. \label{eq:zminusdeltabound} 
\end{align}

Following \newcite{NiyogiSmale} we simplify \eqref{zminusdeltabound} to

\begin{equation}\label{eq:bigenchilada}
\text{Pr}\left[\bigcup_{i=1}^{\size{E}} [A_i = 1] \right] \leq \size{E}\left(1 - \alpha\right)^{\kappa} \leq \size{E}e^{-\kappa \alpha} \leq \delta,
\end{equation}
 
\noindent where we used the fact that $(1 - \beta) \leq e^\beta$ for $\beta \geq 0$. Choose $\kappa$ to be 

\begin{align}
\kappa &\geq \frac{1}{\alpha}\left( \ln{\size{E}} + \ln{(1/\delta)}\right) \\
\kappa &\geq \left(\frac{\size{X} + \size{\Delta}}{\size{X} - \size{E}}\right) \ln{\left(\frac{\size{E}}{\delta}\right)} \label{eq:kappaeq}.
\end{align}

Hence $\tilde{E}^{(\kappa)}$ has a subset $F \subset \tilde{E}^{(\kappa)}$ that realizes a cover of $\X$, and such that $\size{F} \geq \size{E}$. 

We may also bound $\size{\Delta}$ from \eqref{minimumdelta}, 

\begin{equation}\label{eq:deltaeq}
2\size{\Delta} < \size{X} - 3\size{E_X}. 
\end{equation} 

It follows that a cover of $F$ has the same homology as $E$; by \lemref{homologouslemma} we conclude that $\tilde{E}^{(\kappa)}$ is $(\epsilon, \delta, W)$-learnable when \eqref{kappaeq} and \eqref{deltaeq} hold.

\begin{remark}
It is possible to obtain a tighter bound on \eqref{alphafortau} (and thus on \eqref{kappaeq}) by employing a geometric argument. We leave this question open for further research. 
\end{remark}

\section{Proof of \thmref{timecomplexity}}\label{app:timecomplexityproof}

Let $k$ be an iteration of Agora, and let $T_f(\size{D^{(k)}})$ be the upper bound number of steps required to train each of the $\Theta^{(k)}$ Timaeus models $f(\cdot; \cdot)$ on $D^{(k)}$. 
Then the cost, per iteration, of \lineref{startfor} to \lineref{endfor} is

\begin{equation}\label{eq:startforendforcost}
\BigO{\size{\Theta^{(k)}}\left(T_f(\size{D^{(k)}}) + \bar{f}\size{E}\right)}.
\end{equation}

\linestworef{sortstatement}{checkstatement} involve $\size{M^{(k)}}$ calls to $\tau^{Z}_\rho(\cdot)$, and $\size{M^{(k)}}$ calls to Socrates. 
Both operations can be executed at the same time, and, since Timaeus has perfect memory, we know that the bound corresponding to this step is tight. 
Moreover, \lineref{sortqstatement} is a simple sorting statement, which we assume to be comparison-based, while \linestworef{selectstatement}{removestatement} can be executed with a na\"ive implementation that compares every element and employs a counting argument, in time $\operatorname{O}(\size{Q^{(k)}}\size{\theta}^2)$, for $\theta \in \Theta^{(k)}$.

This yields:

\begin{equation}
\BigO{\size{M^{(k)}}\bar{S} + \size{Q^{(k)}}\size{\theta}^2 + \size{Q^{(k)}}\log{\size{Q^{(k)}}}}.
\end{equation}

Adding both equations together, and using the fact that $\size{Q^{(k)}} = \size{\Theta^{(k)}}$, gives a total time per iteration of

\begin{equation}\label{eq:periterationtime}
\text{Cost}(k) = \BigO
{\size{\Theta^{(k)}}\left(T_f(\size{D^{(k)}}) + \bar{f}\size{E} + \size{\theta}^2 + \log{\size{\Theta^{(k)}}}\right) +
 \size{M^{(k)}}\bar{S} }.
\end{equation}

Since in the worst case Agora prunes out exactly one member of $\Theta$ at each iteration, \eqref{periterationtime} can be expressed in terms of $\size{\Theta}$. 
By using \lemref{periterationperf} we obtain:

\begin{multline}\label{eq:bigenchilada2}
\sum_{k=1}^{\size{\Theta}} \text{Cost}(k) = \sum_{k=1}^{\size{\Theta}}\size{\Theta^{(k)}}\left(\bar{f}\size{E} + T_f(\size{D^{(k)}})\right) +  \\
 \BigO{\size{\Theta}^2\left(\log{\size{\Theta}} + 
 \size{\theta}^2 \right) +
 \bar{S}\size{\Theta}\size{E}\left(1 - \frac{1}{2^{\size{\Theta}}}\right)}.
\end{multline}

Note that, for $k \geq 2$,
\begin{equation}
\sum_{k=1}^{\size{\Theta}}\size{\Theta^{(k)}}T_f(\size{D^{(k)}}) = \size{\Theta}T_f(\size{D}) +\sum_{k=2}^{\size{\Theta}}(\size{\Theta} - (k-1))T_f\left(\size{D} + \frac{\size{E}}{2^{k-1}}\right),
\end{equation}

\noindent and so
\begin{align}
\sum_{k=1}^{\size{\Theta}}\left(\bar{f}\size{E} + T_f(\size{D^{(k)}})\right) \in \BigO{\size{\Theta}^2\left(\bar{f}\size{E} + T_f\left(\size{D} + \frac{\size{E}}{2^{\size{\Theta}-1}}\right)\right)}.
\end{align}

It then follows that \eqref{bigenchilada2} simplifies to 

\begin{equation}
\sum_{k=1}^{\size{\Theta}} \text{Cost}(k) \in \BigO{\size{\Theta}^2\left(\log{\size{\Theta}} + 
\size{\theta}^2 + T_f\left(\size{D} + \frac{\size{E}}{2^{\size{\Theta}-1}}\right)\right) + 
\right.   
\left.
\bar{S}\size{\Theta}\size{E}\left(1 - \frac{1}{2^{\size{\Theta}}}\right)},
\end{equation}

\noindent as desired.

\section{Proof of \corref{sgdcor}}\label{app:nnruntimecomplexityproof}

Let $\pi_i = \{\eta_i, \size{B}_i\}$, and note that every combination of elements of $\pi_i$ guarantees an $\epsilon_i$-accurate solution (w.p.1) in polynomial time under the assumptions given. The unknown $\epsilon_i$ is fully characterized by $\pi_i$ and, by definition, the rest of the hyperparameters from $\theta_i$. 

It follows that constructiong a hyperparameter set $\theta_i' = \theta_i \cup \pi_i$ allows Agora to search over $\pi_i$, and select the appropriate values based on $\epsilon_i$, hence maintaining the invariants from \linestworef{maxstatement}{removestatement}, which, by \lemref{periterationperf} and \thmref{timecomplexity}, ensure correctness. 

The bound for \eqref{nnruntimecomplexity} is obtained by a substitution in \eqref{runtimecomplexity} of $T_f(D)$. Here $T_f(D)$ is bounded by using the fact that \train$(f, B_i, \theta_i')$ with shuffling-type SGD runs in $\operatorname{O}(\size{B}_i\bar{f}/\epsilon_i^{2/3})$ steps \cite{NguyenWainwrightJordan}; and that $\forall \pi_i, \theta_i, \epsilon_j$, \train$(f, B_i, \theta_i)$ induces a surjection $\pi_i \cup \theta_i \mapsto \epsilon_j$, 

\section{Experimental Results}\label{app:experiments}

In this section we compare the accuracy and runtime between Agora and an enumeration (i.e., picking the best model based on the evaluation set $\forall \theta \in \Theta$) for a benchmark dataset. 
We use the Breast Cancer Wisconsin (Diagnostic) Dataset \cite{UWDataset} from UCI's Machine Learning Repository \cite{Dua}. 
It is a binary classification task comprised of $569$ samples. The learner is given a vector in $\mathbb{R}^{30}$ corresponding to the characteristics of a cell nucleus (e.g. radius, smoothness, and symmetry) and must determine whether it is malignant or benign. 

For both experiments, our hypothesis space is a set of decision trees, with their depth as the search space: $\Theta = \{\{1\}, \{2\}, \{3\}, \{4\}, \{5\}\}$.\footnote{Formally, this means that Timaeus has variable VC dimension. This does not affect the results, since the hypothesis space is fixed.} 
We split the dataset in two, reserving $25\%$ for the test set. This task is known to be easy to solve \cite{Olvietal}; therefore, we adapt it to our setting by removing all but the first $5$ examples from the training set. The next $30$ are used as our development set, and we discard the rest. 
We use a random seed of $0$ and the Gini impurity as the objective function. 

For Agora, we train a depth-$5$ decision tree in Scikit-learn \cite{scikit-learn}. 
This tree trains in $4$ milliseconds on a 6-core Intel i7 processor running at 2.6 GHz. It obtains an accuracy of $90\%$ on the test set, and we use it as the Socrates model in our evaluation of Agora. 
The $\tau$-function is chosen to inject $(0,1)$-Gaussian noise in the input at random coordinates with a probability of $2/5$. The Gaussian noise both simulates the set $\Delta$ from the $\tau$-function's codomain, and provides a reasonable $\rho$ since all features from the points in the dataset are normalized. 

See \tabref{results} for a comparison between both approaches. Agora's final model obtains an accuracy of $93.7\%$ on the test set. In contrast, the enumeration over $\Theta$ obtains an accuracy of $72.0\%$. However, the enumeration takes $7$ milliseconds to run, while Agora takes $39.8$ milliseconds. These results are well within the estimated runtime bound of $\operatorname{O}(\size{\Theta}^2)$ from \secref{discussioncomp} and the performance bound from \thmref{maintheorem}. 

\begin{table}
\centering
\begin{tabular}{ |c|c|c|c| } \hline
 Algorithm   & Accuracy (dev) & Accuracy (test) & Time elapsed (ms) \\ \hline\hline
 Enumeration & $75.0$          & $72.0$         & $\mathbf{7.0}$  \\
 Agora       & $\mathbf{95.0}$          & $\mathbf{93.7}$         & $39.8$ \\ \hline
\end{tabular}
\caption{Results for a run of Agora on the Breast Cancer Wisconsin (Diagnostic) Dataset, when compared to an enumeration over $\Theta$, and using decision trees as the Socrates and Timaeus models. Agora's resulting Timaeus model outperforms an enumeration in both splits of the dataset, in addition to outperforming Socrates by $3.7\%$. Note that the runtime is well within the estimated theoretical bound of $\operatorname{O}(\size{\Theta}^2)$ from \secref{discussioncomp}.}
\label{tab:results}
\end{table}

\end{document}